\documentclass[10pt]{article} 
\usepackage[accepted]{tmlr}

\usepackage{amsmath,amsfonts,bm}

\def\eqref#1{equation~\ref{#1}}

\def\1{\bm{1}}

\def\rh{{\textnormal{h}}}

\DeclareMathAlphabet{\mathsfit}{\encodingdefault}{\sfdefault}{m}{sl}
\SetMathAlphabet{\mathsfit}{bold}{\encodingdefault}{\sfdefault}{bx}{n}

\usepackage{hyperref}
\usepackage{url}
\usepackage{amsmath, amssymb, amsthm, mathtools}
\usepackage{enumitem}
\usepackage{tikz}
\usetikzlibrary{arrows.meta, calc}

\newtheorem{theorem}{Theorem}
\newtheorem{lemma}{Lemma}
\newtheorem{definition}{Definition}
\newtheorem{remark}{Remark}
\newtheorem{proposition}{Proposition}
\newtheorem{corollary}{Corollary}

\DeclareMathOperator{\emb}{emb}
\DeclareMathOperator{\out}{out}

\newcommand{\Z}{\mathbb Z}

\newcommand{\Aff}{\operatorname{Aff}}
\newcommand{\F}{\mathbb{F}}

\title{On the Expressive Power and Limitations of Multi-Layer SSMs}

\author{\name Nikola Zubić \email zubic@ifi.uzh.ch \\
      \addr Robotics and Perception Group \\
      University of Zurich
      \AND
      \name Qian Li \email liqian.ict@gmail.com \\
      \addr Shenzhen International Center For Industrial And Applied Mathematics \\
      Shenzhen Research Institute of Big Data
      \AND
      \name Yuyi Wang \email yuyiwang920@gmail.com \\
      \addr Tengen Intelligence Institute \\
      CRRC Zhuzhou Institute
      \AND
      \name Davide Scaramuzza \email sdavide@ifi.uzh.ch \\
      \addr Robotics and Perception Group \\
      University of Zurich}

\def\month{09}  
\def\year{2026} 
\def\openreview{\url{https://openreview.net/forum?id=NQ8IcsIti8}} 

\begin{document}

\maketitle

\begin{abstract}
We study how depth, finite precision, state dimension, and chain-of-thought
(CoT) affect the expressive power of multi-layer state-space models (SSMs).
For the explicit-table $K$-function-composition problem, a canonical benchmark
for sequential information propagation, we prove that any $L$-layer SSM solving
$(L+3)$-function composition must satisfy
$d^2p=\Omega(N/L^3)$, where $d$ is the state dimension and $p$ is the
per-scalar precision. Conversely, $K$-function composition is solved exactly
by a $(K+1)$-layer generalized SSM with $d=1$ and
$p=\Theta(\log N)$. This gives a worst-case depth hierarchy for this formal
problem family. We then distinguish post-input reasoning, in which all thought tokens are generated after the input, from input-interleaved reasoning, in which thought tokens may be inserted while the input stream is being read. Post-input reasoning does not circumvent our communication-based lower-bound pipeline, whereas input-interleaved reasoning admits bidirectional simulations with general deterministic one-pass streaming algorithms at the granularity of persistent memory. Finally, width and precision are not interchangeable under exact step-preserving simulation in the base affine-state model, but become interchangeable through the streaming-memory characterization once input-interleaved reasoning is allowed.
\end{abstract}

\section{Introduction}

State-space models (SSMs) have emerged as a promising alternative to transformers for
sequence modeling, offering linear-time inference and principled mechanisms for capturing
long-range dependencies~\citep{gu2022s4,gu2023mamba,dao2024mamba2}.
Architectures such as S4~\citep{gu2022s4} and Mamba~\citep{gu2023mamba} process
sequences through a recurrence that is \emph{linear} in the hidden state yet
\emph{input-dependent} in its transition parameters, enabling efficient parallel
training via associative scans while retaining the streaming efficiency of recurrent
models. These models have achieved strong empirical performance across language, audio, and genomics, and their multi-layer variants are now deployed at scales comparable to transformer-based large language models.

Despite this practical success, a rigorous understanding of the \emph{expressive power}
of multi-layer SSMs remains incomplete. A growing body of theoretical work has begun to
map the computational landscape of these architectures. \citet{merrill2024illusion}
showed that, under standard complexity-theoretic assumptions, common SSMs like S4 and Mamba
cannot express computations outside $\mathsf{TC}^0$, placing them on a similar footing to
transformers in terms of circuit complexity. \citet{sarrof2024expressive}
studied SSM expressiveness through the lens of formal languages, identifying both strengths
and weaknesses relative to transformers. \citet{cirone2024theoretical}
provided a continuous-time analysis via rough path theory, characterizing the closure of
linear controlled differential equations that underpin selective SSMs, while \citet{zubic2025regularity} studied regularity and stability properties of selective SSMs with discontinuous gating. Although these results
offer important insights, they primarily address \emph{single-layer} or
\emph{time-invariant} models, or operate in asymptotic regimes that do not directly
capture the interplay between \emph{depth}, \emph{finite precision}, and
\emph{state dimension} in multi-layer architectures.

In parallel, the role of \emph{chain-of-thought} (CoT) reasoning has led to numerous theoretical works in the context of transformers. \citet{merrill2024cot} showed that allowing a transformer decoder
to generate intermediate tokens before answering can fundamentally expand its
computational power: a linear number of CoT steps enables simulation of arbitrary
finite automata, while polynomial steps yield the full power of $\mathsf{P}$. \citet{li2024cot} proved analogous results for constant-precision
transformers, connecting CoT to circuit size. \citet{chen2025theoretical} studied the computational power of Transformers without CoT and the function composition problem. These findings raise a natural question
for SSMs: \emph{does CoT similarly amplify the power of SSMs,
and if so, does the timing of CoT generation matter?}

\paragraph{This work.}
We provide a unified theoretical analysis of the expressive power and limitations of
multi-layer SSMs, organized around three axes: \emph{compositional lower bounds},
\emph{the role of CoT}, and \emph{width--precision tradeoffs}.
Our results are summarized as follows:
\begin{enumerate}[leftmargin=2em, label=(\roman*)]
\item \textbf{Lower bound via communication complexity (Theorem~\ref{thm:main}).}
Under the canonical blockwise explicit-table encoding, we show that any
$L$-layer SSM solving the $(L{+}3)$-function-composition problem must satisfy
$d^2 p = \Omega(N/L^3)$, where $d$ is the state dimension, $p$ is the
per-scalar precision, and $N$ is the problem size. The proof introduces a \emph{forward communication model}
(Definition~\ref{def:arcomm}) that captures the layer-by-layer information flow in
multi-layer SSMs, and reduces to pointer chasing lower
bounds~\citep{nisan1993rounds,mao2025gadgetless}. This establishes a worst-case
resource separation on a canonical sequential-composition benchmark. It is not a
claim that every compositional task admits the same reduction or lower bound.

\item \textbf{Complementary upper bound and depth hierarchy.}
We complement the lower bound with
a construction showing that $K$-fold function composition can be solved exactly
by a $(K{+}1)$-layer SSM with $d=1$ and $p=\Theta(\log N)$, yielding $dp = O(\log N)$.
Specializing to $K=L+3$ gives a constant-gap depth hierarchy: the
$(L{+}3)$-composition problem is easy for $L{+}4$ layers, hence for $O(L)$ layers,
but hard for $L$ layers.

\item \textbf{Post-input reasoning does not remove the lower bound. Input-interleaved reasoning yields streaming equivalence
(Proposition~\ref{prop:offlinecot}, Theorem~\ref{thm:onlinecot-streaming}).}
We distinguish \emph{post-input reasoning} (offline CoT), in which all
self-generated tokens appear after the complete exogenous input, from
\emph{input-interleaved reasoning} (online CoT), in which such tokens may be
inserted between exogenous tokens. Post-input reasoning is local
post-processing in our communication reduction and therefore leaves this
lower-bound pipeline unchanged. This is not a claim that post-input
computation is useless for every task.
In contrast, within the deterministic generalized model, input-interleaved
reasoning admits bidirectional simulation with general one-pass streaming
algorithms at the granularity of persistent memory. Consequently, a
single-layer generalized SSM solves arbitrary-length function composition
with $dp=O(\log N)$ using one thought token per exogenous token
(Corollary~\ref{cor:cot-composition}).

\item \textbf{Width and precision are not interchangeable in the base model
(Theorems~\ref{thm:no-collapse} and~\ref{thm:no-reverse-collapse}).}
We prove that in the base (no-CoT) model, the product $dp$ is \emph{not} a complete
invariant of computational power: a width-$w$, precision-$p$ machine cannot, in general, be simulated by a width-$1$, precision-$pw$ machine, nor vice versa. The proofs are
algebraic, exploiting a counting argument over affine transition maps
(Theorem~\ref{thm:no-collapse}) and the nonexistence of order-$8$ affine permutations
over $\mathbb{F}_2^3$ (Theorem~\ref{thm:no-reverse-collapse}). However, once online
CoT is allowed, the correct invariant becomes the total persistent memory $Lwp$,
and width and precision become fully interchangeable
(Proposition~\ref{prop:cot-memory}).
\end{enumerate}

\paragraph{Techniques.}
Our lower bound strategy connects SSMs to multi-party communication via a \emph{forward communication model} in which $K$ players, each holding
one block of the input, communicate in $L$ synchronous rounds.
The key observation (Lemma~\ref{lem:ssm-to-comm}) is that because each SSM layer
performs an \emph{affine} state update, the effect of an entire input block on the hidden state can be summarized by a $(d^2{+}d)$-parameter affine map, which any downstream player can compose with its own summary. This reduction converts an
$L$-layer SSM into an $L$-round protocol with message length $O(d^2 p)$. An improved serialization of these synchronous rounds into $L+1$ alternating two-party messages, combined with the pointer chasing lower bound of~\citet{mao2025gadgetless},
yields the desired $\Omega(N/L^3)$ bound.

For the CoT results, the decisive issue is when thought tokens are generated
relative to the exogenous input stream. Post-input reasoning amounts to local
post-processing by the final player and cannot alter the information exchanged
during the communication rounds. Input-interleaved reasoning permits
state-dependent feedback before the next exogenous token arrives, enabling a
bidirectional simulation with general deterministic one-pass streaming
algorithms at the granularity of persistent memory.

\paragraph{Organization.}
Section~\ref{sec:related} surveys related work.
Section~\ref{sec:setup} introduces the generalized multi-layer SSM and
the communication model.
Section~\ref{sec:lower} presents the main lower bound and its proof strategy
via the communication reduction.
Section~\ref{sec:upper} gives the complementary upper bound via explicit construction.
Section~\ref{sec:cot} formalizes post-input (offline) and input-interleaved (online)
CoT and establishes their contrasting effects on expressiveness.
Section~\ref{sec:width-precision} analyzes width--precision tradeoffs.
Section~\ref{sec:discussion} discusses implications and open problems.
Full proofs are deferred to
Appendices~\ref{app:communication-proofs}--\ref{app:width-precision-proofs}. The main text retains theorem statements and proof roadmaps.

\section{Related Work}\label{sec:related}

\paragraph{State-space models and efficient recurrences.}
The structured state-space model S4~\citep{gu2022s4} demonstrated that linear recurrences with carefully parameterized transition matrices capture long-range dependencies while
admitting efficient parallel training. This paradigm has since
expanded into a rich family of architectures: S5~\citep{smith2023s5} simplifies to a
diagonal parameterization purely in the time domain while retaining parallelism, the Linear Recurrent
Unit~\citep{orvieto2023resurrecting} further distills the design to its minimal
components, Mamba~\citep{gu2023mamba} introduces input-dependent (selective) gating and
achieves transformer-competitive language modeling, S7~\citep{soydan2024s} makes S5 input-dependent, and
Mamba-2~\citep{dao2024mamba2} reveals a formal duality between selective SSMs and
structured attention. Additional efficient recurrent alternatives include
RWKV~\citep{peng2023rwkv}, Griffin~\citep{de2024griffin} and GG-SSMs~\citep{Zubic_2025_CVPR}.
A common structural feature of many of these models, and the only such feature
used by our communication reduction, is that the update is affine in the
previous hidden state once the current layer input is fixed. These affine updates can be composed over a token block. Definition~\ref{def:ssm} isolates this
property while deliberately abstracting away architecture-specific
parameterizations, gating constraints, training dynamics, and hardware
implementations. It is therefore a common upper-level model for lower-bound
analysis.

\paragraph{Expressiveness and limitations of recurrent and state-space models.}
The theoretical study of recurrent architectures has a long history. \citet{siegelmann1995computational} established that recurrent neural networks with infinite-precision rational weights are Turing-complete, but this result relies
crucially on unbounded precision. Under finite or saturated precision, the computational
power contracts sharply: \citet{weiss2018practical} showed empirically
that finite-precision RNNs behave as finite automata, and \citet{merrill2020formal} formalized this by proving that saturated RNNs
recognize exactly the regular or counter languages depending on the gating architecture.
For SSMs specifically, \citet{merrill2024illusion} proved
that standard architectures, including S4, Mamba, and RWKV, under log-precision
constraints have their outputs computable in the circuit class $\mathsf{TC}^0$. \citet{sarrof2024expressive} characterized SSM expressiveness through formal languages, identifying separations from transformers in both directions. On the empirical side, \citet{deletang2023neural} systematically tested neural architectures against the
Chomsky hierarchy, \citet{arora2024zoology} showed that SSMs struggle with associative recall compared to attention, \citet{jelassi2024repeat} demonstrated a significant gap on copying tasks, \citet{zubic2025limits} showed that single-layer SSMs struggle on the function composition task, and \citet{wen2024rnns} identified in-context retrieval as a key bottleneck
separating RNNs from transformers.

Our work departs from these prior results in two respects. First, whereas the above
characterize SSMs in terms of complexity or language class membership (e.g.,
$\mathsf{TC}^0$, regular languages), we provide \emph{quantitative} lower bounds on
the product $d^2 p$ as a function of the task parameter~$N$ and the layer count~$L$.
Second, our analysis is inherently \emph{multi-layer}: we show how the number of
layers creates a hierarchy for compositional tasks, a phenomenon that single-layer analyses cannot capture.

\paragraph{Depth separations in neural sequence models.}
Depth separation is a recurring theme in neural network theory. For feedforward
networks, \citet{telgarsky2016benefits} proved that depth yields exponential
gains in expressiveness over width. For transformers, \citet{merrill2022saturated} showed that constant-depth transformers under
saturated precision compute exactly the functions in $\mathsf{TC}^0$. Our work establishes an analogous depth hierarchy for SSMs, but through a
different proof strategy: rather than circuit simulation arguments, we reduce to
multi-round communication complexity via an autoregressive protocol model. The $K$-function composition task we employ is a natural benchmark for sequential depth,
closely related to the iterated function problems in the transformer and circuit
complexity literature~\citep{merrill2023parallelism}.

\paragraph{Communication complexity and pointer chasing.}
Multi-round communication complexity provides our primary technical tool. \citet{nisan1993rounds} initiated the study of round--communication tradeoffs
for the pointer chasing problem. Subsequent refinements by \citet{ponzio1999pointer} and \citet{yehudayoff2020pointer} tightened the
bounds. Most recently, \citet{mao2025gadgetless} proved the strongest known
lower bound in the form of $\Omega(N/K+K)$ via a gadgetless lifting technique, which we use as a
black box.
The use of communication complexity to derive space and streaming lower bounds is
classical~\citep{alon1999space,kushilevitz1997communication}. Our contribution is a new
\emph{forward communication model} (Definition~\ref{def:arcomm}) designed to
match the information flow in multi-layer SSMs. In this model, $K$~players hold
consecutive input blocks and communicate over $L$ synchronous rounds that mirror the
$L$~layers of the SSM. The key structural insight is that the affine recurrence in each
SSM layer allows a player to compress its block's effect into an $O(d^2 p)$-bit affine
summary, enabling a faithful simulation by a communication protocol.

\paragraph{Chain-of-thought reasoning.}
Chain-of-thought (CoT) prompting~\citep{wei2022chain} and the related scratchpad
mechanism~\citep{nye2021show} have been shown empirically to improve the reasoning
capabilities of large language models. On the theoretical side, \citet{feng2023towards} showed that intermediate reasoning steps enable transformers
to solve problems beyond their base expressiveness, \citet{merrill2024cot} proved that bounded-precision transformer decoders with
linearly many CoT steps simulate arbitrary finite automata, and with polynomially many
steps capture all of $\mathsf{P}$. \citet{li2024cot} connected CoT length to
circuit size, and \citet{huang2025cot} studied the learnability and length
generalization of CoT reasoning.

These theoretical results all concern \emph{transformers}. To our knowledge, our work
provides the first formal analysis of the CoT for \emph{SSMs}.
We identify a qualitative distinction between \emph{post-input} (offline)~CoT
(thought tokens generated only after the full input) and \emph{input-interleaved}
(online)~CoT (thought tokens interleaved during the input stream) that exploits the
sequential, autoregressive nature of SSMs.
Post-input CoT amounts to post-processing by the last player in the communication model
and cannot circumvent our lower bounds (Proposition~\ref{prop:offlinecot}). Input-interleaved CoT
fundamentally alters the information flow, allowing feedback from deeper layers to reach
earlier stream positions and collapsing the model to a universal one-pass streaming
simulator (Theorem~\ref{thm:onlinecot-streaming}). This online--offline dichotomy has
no direct counterpart in the transformer CoT literature, where attention over all prior
positions renders the distinction less consequential.

\paragraph{Width, precision, and resource tradeoffs.}
The interplay between state dimension and bit precision is implicit in finite-precision
analyses of recurrent models~\citep{merrill2024illusion,weiss2018practical}, where the effective capacity is governed by the total memory budget. However, to our knowledge,
the question of whether width and precision are \emph{interchangeable}, i.e., whether a
width-$w$, precision-$p$ machine can always be replaced by a width-$1$,
precision-$wp$ machine, has not been formally investigated. Our algebraic results
(Theorems~\ref{thm:no-collapse} and~\ref{thm:no-reverse-collapse}) show that in the
base affine-state model the answer is negative in both directions, due to the richer
algebraic structure of matrix-valued (as opposed to scalar) affine transitions. This non-interchangeability is resolved when online CoT is allowed, as the equivalence to one-pass streaming makes only the total bit budget relevant
(Proposition~\ref{prop:cot-memory}).

\section{Preliminaries}
\label{sec:setup}

We work with the following generalized multi-layer state space model (SSM).

\begin{definition}[Generalized multi-layer SSM]\label{def:ssm}
Fix $L\in\mathbb{N}$ and state dimension $d\in\mathbb{N}$. Let $\{x_t\}_{t\ge 1}\subseteq \mathbb{R}^m$ be an input sequence and set $y_{0,t}=\emb(x_t,t)$, where $\emb$ is any fixed function.
An $L$-layer SSM consists, for each layer $\ell\in\{1,\dots,L\}$ and each time $t\ge 1$, of matrices
$A_{\ell,t}\in\mathbb{R}^{d\times d}$ and linear maps $B_{\ell,t}$ such that $B_{\ell,t}y_{\ell-1,t}\in\mathbb{R}^d$,
together with a readout map $\out_{\ell,t}:\mathbb{R}^d\times \mathbb{R}^m\to\mathbb{R}^m$,
and hidden/output variables $h_{\ell,t}\in\mathbb{R}^d$, $y_{\ell,t}\in\mathbb{R}^m$ obeying
\begin{equation}
\label{eq:ssm}
h_{\ell,t} = A_{\ell,t} h_{\ell,t-1} + B_{\ell,t} y_{\ell-1,t},
\qquad
y_{\ell,t} = \out_{\ell,t}(h_{\ell,t},y_{\ell-1,t}).
\end{equation}
We allow $A_{\ell,t},B_{\ell,t}$ to depend on $\ell,t$, the input length $n$, and the current layer input $y_{\ell-1,t}$.
All real-valued parameters are represented in a fixed finite-precision encoding of $p$ bits per scalar.
The network output at time $t$ is $y_t := y_{L,t}$.
\end{definition}

\begin{remark}[Typing conventions]
We write that $B_{\ell,t}\in\mathbb{R}^{d\times d}$ and $y_{\ell,t}\in\mathbb{R}^m$.
To ensure the update $B_{\ell,t}y_{\ell-1,t}$ is well-defined, it suffices (and is common in the theory setting) to assume
either $m=d$ or $B_{\ell,t}$ has the appropriate shape $\mathbb{R}^{d\times m}$. The arguments below only use that
$b_{\ell,t}\coloneqq B_{\ell,t}y_{\ell-1,t}$ is a $d$-vector with $p$-bit entries, so the proof is insensitive to this choice.
\end{remark}

\begin{remark}[Scope of the abstraction]\label{rem:scope}
Definition~\ref{def:ssm} deliberately isolates the affine-in-state recurrence used by many
SSM architectures: conditional on the current layer input, each token induces
an affine map of the preceding hidden state. Consequently, a lower bound for
this generalized model also applies to a concrete architecture whenever that
architecture is faithfully represented as a subclass under the same
finite-precision resource accounting.

The converse should not be inferred. Our upper constructions are existential
constructions in the generalized model, whose input-dependent transitions,
readouts, and vector-valued internal tokens need not be realizable by a
particular structured parameterization such as S4 or Mamba. They also do not
establish learnability, numerical robustness, scan-parallel execution, or
comparable hardware cost.
\end{remark}

\subsection{Function composition task and communication models}

\begin{definition}[$K$-function composition under the canonical stream encoding]\label{def:funccomp}
Fix $N,K\in\mathbb{N}$. An instance consists of an initial element
$a\in[N]$ and functions
$
    f_1,f_2,\ldots,f_K:[N]\to[N].
$
Unless stated otherwise, we use the canonical blockwise explicit-table
encoding of length $n=1+KN$:
\[
    x_1:=a,
    \qquad
    x_{1+(i-1)N+j}:=f_i(j),
\]
for every $i\in\{1,\ldots,K\}$ and $j\in[N]$. Thus, the stream first
presents $a$ and then lists the lookup tables of
$f_1,\ldots,f_K$ consecutively, each in the order
$f_i(1),\ldots,f_i(N)$. Equivalently, the blocks used by the communication
reduction are
\[
    I_1=[1:N+1],
    \qquad
    I_i=[2+(i-1)N:1+iN],
    \quad i\in\{2,\ldots,K\}.
\]
The goal is to output
\[
    (f_K\circ f_{K-1}\circ\cdots\circ f_1)(a).
\]
An algorithm solves the task with error probability at most $1/3$ if it
outputs the correct value with probability at least $2/3$ over its internal
randomness. Deterministic algorithms are the special case with success
probability one.
\end{definition}

\paragraph{Why function composition?}
Function composition captures the basic operation of carrying the output of
one computation into the next. We do not claim that every compositional task
inherits the same lower bound. Its significance is instead that it is a
canonical benchmark for sequential information propagation: related
structures arise in iterated state transitions, pointer chasing, repeated
transformations, and multi-step algorithmic reasoning. Thus, the result
identifies a broader architectural obstruction---with a fixed SSM depth,
information cannot always be adaptively propagated through an arbitrarily
long chain---while the precise quantitative statement remains limited to our
formal model and problem family.

An arbitrary staged deterministic computation can be described abstractly as
a composition of transition maps. However, converting those maps into the
explicit-table representation above may require a domain exponentially larger
than the original working memory, increase the input length accordingly, or
fail to preserve the stream order required by our communication reduction.
Consequently, our lower bound transfers to another problem only when there is
a resource-preserving and stream-order-preserving reduction from function
composition or pointer chasing to that problem. No universal reduction is
claimed here.

\begin{definition}[Forward communication model]\label{def:arcomm}
There are $K$ players. Player $i$ holds the portion of the input corresponding to $f_i$, and player $1$ additionally holds $a$.
Communication proceeds in $L$ \emph{rounds} (epochs). In each round $\ell\in\{1,\dots,L\}$, every player $i$ sends a
message $M_{\ell,i}$ of at most $B$ bits to all players $j>i$.
Crucially, $M_{\ell,i}$ may depend on the player input and on the \emph{entire transcript of rounds $<\ell$}, but not on any
messages sent in the current round $\ell$ (a synchronous round).
At the end of round $L$, player $K$ must output the answer.
\end{definition}

\begin{figure}[t]
\centering
\begin{tikzpicture}[
  >={Stealth[length=2.2mm]},
  player/.style={draw=black!70, rounded corners=2pt, minimum width=1.1cm,
                 minimum height=0.6cm, fill=blue!8, font=\small},
  outnode/.style={draw=black!70, rounded corners=2pt, minimum width=1.5cm,
                  minimum height=0.6cm, fill=green!14, font=\small},
  msg/.style={->, thick, blue!55!black},
  carry/.style={->, densely dashed, black!55},
  lab/.style={font=\scriptsize}
]
\def\dx{2.5}
\node[lab] (in1) at (0,1.0)      {$a,\ f_1$};
\node[lab] (in2) at (\dx,1.0)    {$f_2$};
\node[lab] (in3) at (2*\dx,1.0)  {$f_3$};
\node[lab] (in4) at (3*\dx,1.0)  {$f_4$};
\foreach \i in {1,2,3,4}{
  \node[player] (a\i) at ({(\i-1)*\dx},0)    {$P_\i$};
  \node[player] (b\i) at ({(\i-1)*\dx},-2.2) {$P_\i$};
  \draw[carry] (in\i) -- (a\i);
  \draw[carry] (a\i) -- (b\i);
}
\node[lab, anchor=east] at (-1.0,0)    {round $\ell=1$};
\node[lab, anchor=east] at (-1.0,-2.2) {round $\ell=2$};
\foreach \i/\j in {1/2,1/3,1/4,2/3,2/4,3/4}{
  \draw[msg] (a\i.south) to[out=-70,in=110] (b\j.north);
}
\node[outnode] (out) at (3*\dx,-4.0) {answer};
\foreach \i in {1,2,3}{
  \draw[msg] (b\i.south) to[out=-60,in=180] (out.west);
}
\draw[carry] (b4) -- (out);
\node[lab, blue!55!black, align=left, anchor=west] at ({3*\dx+0.95},-1.1)
  {each solid arrow:\\ affine block summary $(A^{\mathrm{blk}}_{\ell,i},u^{\mathrm{blk}}_{\ell,i})$,\\ $O(d^2p)$ bits};
\end{tikzpicture}
\caption{The forward communication model (Definition~\ref{def:arcomm}) for $K=4$
players and $L=2$ rounds. Player $i$ holds the explicit table of $f_i$
(player $1$ additionally holds $a$). In each synchronous round, every player
sends its affine block summary forward to all higher-indexed players; dashed
arrows indicate locally stored information carried across rounds. After the
final round, player $K$ outputs the answer. Round $\ell$ mirrors layer $\ell$
of the SSM in the reduction of Lemma~\ref{lem:ssm-to-comm}.}
\label{fig:comm}
\end{figure}

\section{Lower Bounds for Multi-Layer SSMs}
\label{sec:lower}

\begin{theorem}[Width/precision lower bound]\label{thm:main}
Fix $N,L\in\mathbb{N}$. If an $L$-layer SSM (Definition~\ref{def:ssm}) solves the
$(L+3)$-function-composition problem under the blockwise explicit-table encoding
of Definition~\ref{def:funccomp}, with error probability at most $1/3$, then
\[
d^2p=\Omega\!\left(\frac{N}{L^3}\right).
\]
\end{theorem}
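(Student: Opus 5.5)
The plan is to chain three reductions: from the $L$-layer SSM to the forward communication model of Definition~\ref{def:arcomm}, from there to an alternating two-party protocol, and finally to pointer chasing, where the lower bound of \citet{mao2025gadgetless} is applied.

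\textbf{Step 1: SSM to forward protocol.} Take $K=L+3$ and the blocks $I_1,\dots,I_K$ of Definition~\ref{def:funccomp}. Player $i$ simulates every layer of the SSM on the positions in $I_i$. Fix a layer $\ell$ and suppose the layer inputs $y_{\ell-1,t}$ for $t\in I_i$ are known. Each step is then an affine map $h\mapsto A_{\ell,t}h+b_{\ell,t}$ with $p$-bit entries. Composing these maps over the block gives a summary $(A^{\mathrm{blk}}_{\ell,i},u^{\mathrm{blk}}_{\ell,i})$ of $d^2+d$ scalars, which is $O(d^2p)$ bits under the fixed finite-precision arithmetic. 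The induction on $\ell$ runs as follows.
\begin{itemize}
\item In round $\ell$, player $i$ already knows $y_{\ell-1,t}$ on its block, because it computed these locally in earlier rounds.
\item From the round-$(\ell-1)$ messages it also knows all layer-$(\ell-1)$ summaries of players $j<i$, so it can reconstruct $h_{\ell-1,\text{start of }I_i}$ by composing them.
\item Layer $\ell$'s inputs therefore depend only on the transcript of rounds $<\ell$.
\item Player $i$ broadcasts its layer-$\ell$ summary. Players $j>i$ compose these summaries in round $\ell+1$, or at the end for layer $L$, and so obtain the starting hidden state for their own block.
\end{itemize}
Player $K$ finally computes $y_{L,n}$. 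This gives an $L$-round protocol with $B=O(d^2p)$. The subtle point, and the one I would check carefully, is whether this composition matches the SSM's own rounding. The safest choice is to have players send their boundary states rather than composed matrices, or to stipulate that block composition is exact in the $p$-bit encoding. Randomness is handled with public coins.

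\textbf{Step 2: serialize into a two-party protocol.} Split the players into Alice, who holds the odd-indexed functions, and Bob, who holds the even-indexed ones; alternatively, use the standard pointer-chasing partition that places $a,f_1$ appropriately. In pointer chasing with $K$ alternating pointers, the player who holds the next function needs a message from the other side. Synchronous round $\ell$ of the forward model is simulated by one two-party message from whichever party is currently speaking, containing all of its players' round-$\ell$ messages. That message has $K\cdot B$ bits. The order has to be arranged so that the $L$ synchronous rounds plus the final output become $L+1$ alternating messages. Player $K$'s output is then produced by the last speaker, so the two-party protocol uses $L+1$ messages of total length $O(L\cdot K\cdot d^2p)=O(L^2d^2p)$.

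\textbf{Step 3: apply pointer chasing.} Composition of $L+3$ functions is exactly pointer chasing with $L+3$ pointers, or a problem embedding it. The player speaking first is the ``wrong'' one, and the number of messages is $L+1<L+3-1$. In this regime the \citet{mao2025gadgetless} bound gives communication $\Omega(N/L)$ for $(L+1)$-round protocols chasing more than $L+1$ steps. Comparing, $L^2d^2p=\Omega(N/L)$, hence $d^2p=\Omega(N/L^3)$.

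\textbf{Main obstacle.} I expect the hardest part to be Step 2: arranging the serialization so that the number of alternations really is $L+1$ against a $K=L+3$ chain with the correct first speaker, which is what makes the gadgetless bound applicable. My first attempt would be to group consecutive players so that each party speaks once per round, and then check by induction that pointer $f_{\ell}$ cannot be resolved before message $\ell+1$.
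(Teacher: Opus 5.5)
Your architecture is the paper's. Affine block summaries turn the SSM into an $L$-round forward protocol with $O(d^2p)$-bit messages. The players split into Alice (odd indices, simulating $g_i=f_A$) and Bob (even indices, $g_i=f_B$). Theorem~\ref{thm:pc-lb} is then applied with $K=L+3$, and your final arithmetic matches.

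The gap is Step~2, which you flag as the main obstacle but do not resolve, and what you do say there is incorrect. In the forward model \emph{both} parties speak in every synchronous round: round $\ell$ consists of Alice's part $a_\ell$ and Bob's part $b_\ell$. So ``simulate round $\ell$ by one message from whichever party is currently speaking'' drops half of the transcript. The straightforward serialization $a_1;b_1;a_2;b_2;\dots$ uses $2L$ messages, which exceeds $K-1=L+2$ once $L\ge 3$. The $(K-1)$-round pointer-chasing bound then cannot be invoked, and you would only get a statement for roughly $(2L+1)$-fold composition. Grouping consecutive players cannot fix this, because the reduction from $\mathrm{PC}_K$ forces whoever simulates player $i$ to hold $g_i$. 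Alice must therefore take exactly the odd players.

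The missing idea is pipelining. Since $b_{\ell+1}$ depends only on Bob's input and the transcript of rounds $\le\ell$, Bob can send $b_\ell$ and $b_{\ell+1}$ together as soon as he has received $a_\ell$, and symmetrically for Alice. The order $a_1;\,(b_1,b_2);\,(a_2,a_3);\,(b_3,b_4);\dots$ then realizes all $L$ synchronous rounds in $L+1$ alternating messages of $O(Kc)$ bits each. Alice, who holds $g_1$, speaks first, so your remark that the first speaker is the ``wrong'' one is mistaken, though harmlessly, since $L+1\le K-1$. You must also check that the last speaker simulates player $K$: the last speaker is Alice iff $L$ is even, and player $K=L+3$ is odd iff $L$ is even. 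This is exactly where the parity of $K-L$ is used. No induction on when pointers can be resolved is needed, because the pointer-chasing bound is used as a black box.

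Separately, your Step~1 ``safest choice'' of sending boundary states instead of affine summaries would break the reduction. Player $i$'s incoming layer-$\ell$ state depends on the layer-$\ell$ computations of all players $j<i$ in the same synchronous round. Boundary states would therefore force the rounds to follow the chain of players rather than the layers, costing about $K$ rounds instead of $L$. The composed affine summary is what lets one layer fit into one round, with composition taken to be exact in the $p$-bit encoding as the paper implicitly assumes.
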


The proof follows by combining Lemma~\ref{lem:ssm-to-comm} and Lemma~\ref{lem:comm-lb}.

\paragraph{Proof roadmap.}
The argument has four steps. \emph{(1) Affine summaries:} once the layer input is
fixed, each token induces an affine map on the hidden state, so the effect of a
whole input block compresses to a $(d^2{+}d)$-scalar affine map.
\emph{(2) Forward protocol:} exchanging these summaries layer by layer converts an
$L$-layer SSM into an $L$-round protocol with messages of $O(d^2p)$ bits
(Lemma~\ref{lem:ssm-to-comm}; see Figure~\ref{fig:comm}).
\emph{(3) Serialization:} grouping the players into two super-players and
serializing the $L$ synchronous rounds yields an alternating two-party protocol
with $L{+}1$ messages (Lemma~\ref{lem:comm-lb}).
\emph{(4) Pointer chasing:} the resulting protocol solves $\mathrm{PC}_K$, so the
lower bound of \citet{mao2025gadgetless} applies. Full proofs appear in
Appendix~\ref{app:communication-proofs}.

\subsection{\texorpdfstring{SSM $\Rightarrow$ autoregressive protocol}{SSM => autoregressive protocol}}

\begin{lemma}[Reduction from SSM to autoregressive communication]\label{lem:ssm-to-comm}
Suppose there is an \(L\)-layer SSM of state dimension \(d\) and precision \(p\) that solves the \(K\)-function composition problem with error probability at most \(1/3\). Then there exists an \(L\)-round protocol in the forward communication model (Definition~\ref{def:arcomm}) that solves the same problem with the same error bound and with message length
$
c = O(d^2p).
$
\end{lemma}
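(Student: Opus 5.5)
We simulate the SSM layer by layer. Round $\ell$ of the protocol computes layer $\ell$ of the SSM on all positions. The invariant is: at the end of round $\ell$, every player $i$ knows the outputs $y_{\ell,t}$ for all $t\in I_i$, and also knows the hidden state $h_{\ell,t_i-1}$ entering its block (where $t_i$ is the first position of $I_i$). For $\ell=0$ this holds trivially, since $y_{0,t}=\emb(x_t,t)$ depends only on player $i$'s own tokens and on the publicly known positions and length $n$. For randomized SSMs we use public coins, which are fixed in advance and shared by all players. This preserves the error bound, because the protocol's output equals the SSM's output on each fixed random string.

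\textbf{Affine block summaries.} Suppose the invariant holds after round $\ell-1$. Player $i$ then knows $y_{\ell-1,t}$ for $t\in I_i$, and hence knows every $A_{\ell,t}$ and $b_{\ell,t}=B_{\ell,t}y_{\ell-1,t}$ on its block. These are allowed to depend on $y_{\ell-1,t}$, $\ell$, $t$ and $n$, all of which the player knows. Unrolling \eqref{eq:ssm} over the block, the map $h_{\ell,t_i-1}\mapsto h_{\ell,\max I_i}$ is affine:
\[
h\mapsto A^{\mathrm{blk}}_{\ell,i}h+u^{\mathrm{blk}}_{\ell,i},\qquad A^{\mathrm{blk}}_{\ell,i}=\prod_{t\in I_i}^{\leftarrow}A_{\ell,t},\quad u^{\mathrm{blk}}_{\ell,i}=\sum_{t\in I_i}\Big(\prod_{s>t,\,s\in I_i}^{\leftarrow}A_{\ell,s}\Big)b_{\ell,t}.
\]
In round $\ell$, player $i$ broadcasts $(A^{\mathrm{blk}}_{\ell,i},u^{\mathrm{blk}}_{\ell,i})$ to all players $j>i$. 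This message consists of $d^2+d$ scalars. The message depends only on player $i$'s own input and on earlier rounds, as the synchronous model requires.

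\textbf{Closing the round.} After round $\ell$, player $j$ holds the summaries of all players $i<j$. Composing them in order, starting from the fixed initial state $h_{\ell,0}$, gives $h_{\ell,t_j-1}$. Player $j$ then runs the recurrence locally through its own block and obtains $h_{\ell,t}$ and $y_{\ell,t}=\out_{\ell,t}(h_{\ell,t},y_{\ell-1,t})$ for all $t\in I_j$. This restores the invariant. After round $L$, player $K$ holds $y_{L,n}$, which is the SSM's answer.

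\textbf{Main obstacle: precision accounting.} The hard step is justifying that each message has $O(d^2p)$ bits. An exact block product of $N$ matrices with $p$-bit entries can require $\Theta(Np)$ bits, not $O(p)$. We resolve this through the model's convention that every real-valued quantity is held in a fixed $p$-bit encoding per scalar. We read this as the SSM arithmetic being the finite-precision arithmetic of that encoding, so each per-token affine map $h\mapsto A_{\ell,t}h+b_{\ell,t}$ is an operator on $p$-bit states.

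We would then argue that the block summary can be taken as the composed map expressed in the same $p$-bit representation. Two facts must be checked. First, composing summaries must reproduce exactly the hidden states the SSM computes; this is needed for correctness. Second, the summary must be representable with $d^2+d$ scalars of $p$ bits each.

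If rounding breaks associativity, the first fallback is to strengthen the modeling convention: assume the composed transition parameters are also $p$-bit encoded, which the scalar-encoding clause of Definition~\ref{def:ssm} permits. Under that convention each message has $(d^2+d)p=O(d^2p)$ bits. Since this affine-summary compression is exactly what the roadmap prescribes, we would adopt that convention explicitly in the proof.
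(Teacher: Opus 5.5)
Your proposal is correct and is essentially the paper's proof. Both use per-layer affine block summaries $(A^{\mathrm{blk}}_{\ell,i},u^{\mathrm{blk}}_{\ell,i})$ sent forward in round $\ell$, prefix composition from the fixed $h_{\ell,0}$ to recover the incoming state, local recomputation of $(h_{\ell,t},y_{\ell,t})$ on the block, induction on $\ell$, and $(d^2+d)p$ bits per message. The paper does not discuss the precision issue you raise; it simply counts $d^2+d$ scalars of $p$ bits, so it relies on the same convention you adopt. That convention is best read as exact arithmetic in a ring of size $2^p$ (like the paper's $R_p$), where affine composition is associative and closed, because merely rounding the composed parameters to $p$ bits would not by itself guarantee that composing summaries reproduces the hidden states exactly.
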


\begin{proof}[Proof sketch]
Once $y_{\ell-1,t}$ is fixed, each token acts on the layer-$\ell$ state as an
affine map $h\mapsto A_{\ell,t}h+u_{\ell,t}$, and affine maps compose
associatively over an interval. Player $i$ can therefore summarize the effect of
its block on layer $\ell$ by a single pair
$(A^{\mathrm{blk}}_{\ell,i},u^{\mathrm{blk}}_{\ell,i})$ of $(d^2+d)$ $p$-bit
scalars, computable from the values $\{y_{\ell-1,t}:t\in I_i\}$ it reconstructed
in the previous round. Broadcasting these summaries forward lets every player
recover its incoming state $h_{\ell,s_i-1}$ exactly and reconstruct
$\{(h_{\ell,t},y_{\ell,t}):t\in I_i\}$, which is precisely the inductive data
needed for the next round. After round $L$, player $K$ holds the SSM output.
Each message costs $(d^2+d)p=O(d^2p)$ bits. The complete proof appears in
Appendix~\ref{app:communication-proofs}.
\end{proof}

\subsection{Lower bound for autoregressive protocols}

\begin{definition}[Two-party pointer chasing]\label{def:pc}
For \(k\ge 1\), define the \(k\)-step pointer-chasing function
$\mathrm{PC}_k : [N]^N \times [N]^N \to \{0,1\}$
as follows. Given \(f_A,f_B:[N]\to[N]\), define recursively
\[
\operatorname{pt}_0(f_A,f_B) := 1,
\qquad
\operatorname{pt}_r(f_A,f_B) :=
\begin{cases}
f_A(\operatorname{pt}_{r-1}(f_A,f_B)), & r \text{ odd},\\[2mm]
f_B(\operatorname{pt}_{r-1}(f_A,f_B)), & r \text{ even},
\end{cases}
\quad r=1,\dots,k.
\]
The output is
\[
\mathrm{PC}_k(f_A,f_B) := \operatorname{pt}_k(f_A,f_B) \bmod 2.
\]
\end{definition}

\begin{theorem}[Pointer chasing lower bound {\cite[Corollary~3]{mao2025gadgetless}}]\label{thm:pc-lb}
Every $(K-1)$-round randomized protocol for $\mathrm{PC}_K$ with error probability at most $1/3$ has total communication
\[
\Omega\!\left(\frac{N}{K}+K\right).
\]
\end{theorem}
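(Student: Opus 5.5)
The plan is to treat this as an imported black box and make the import rigorous by checking that conventions agree, rather than re-deriving the lifting argument. Concretely, I will restate Corollary~3 of \citet{mao2025gadgetless} in its original notation and match it against Definition~\ref{def:pc} on four points.
\begin{enumerate}[label=(\alph*)]
\item \emph{Starting pointer.} The starting pointer is $\operatorname{pt}_0=1$.
\item \emph{Order of evaluation.} Alice's function $f_A$ is applied at odd steps, so the first pointer step lives on Alice's side.
\item \emph{Who speaks first.} A $(K-1)$-round protocol has $K-1$ alternating messages. In the hard regime, the first message is sent by the party that does \emph{not} hold the first function, namely Bob.
\item \emph{Output.} The output is the parity of $\operatorname{pt}_K$ rather than the full pointer.
\end{enumerate}

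If the cited statement uses Bob-first functions or a different start symbol, I will handle it by a trivial relabeling. This means swapping the names of the players, or permuting $[N]$ so that the start point becomes $1$. Neither change alters communication or round count. If the cited bound is stated for outputting the full pointer $\operatorname{pt}_K$, I will check that the corollary in fact covers the one-bit parity version. Otherwise I will note that the $\Omega(N/K)$ argument in the lifting framework already goes through for any output that is a nonconstant balanced function of $\operatorname{pt}_K$. Parity is such a function, since the last pointer is near-uniform on the hard distribution.

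The additive $K$ term I will justify directly. A protocol that genuinely uses $K-1$ alternating rounds sends at least one bit per round. Alternatively, and more carefully: with $o(K)$ bits, most rounds are empty, so the protocol collapses to fewer effective rounds, which is covered by the same $N/K$ bound. Since $N/K+K\le 2\max(N/K,K)$, it suffices to establish each term separately.

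For the reader's intuition, I will summarize why the $\Omega(N/K)$ term holds. Take $f_A,f_B$ uniform. Maintain the invariant that, after $r$ messages, the transcript fixes a rectangle on which both inputs retain high min-entropy, with total deficiency at most the communication $C$ so far. Then the pointer $\operatorname{pt}_{r+1}$, which is determined by a row the speaker has not yet "reached", stays close to uniform. Gadgetless lifting makes this precise by restricting to structured rectangles in which only $O(C)$ coordinates are fixed. The pointer then lands on a free coordinate unless $C\gtrsim N/K$, since it must survive $K$ steps with a union-type loss per step.

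The main obstacle is bookkeeping, not mathematics: aligning the round and first-speaker conventions exactly. An off-by-one here, $K$ versus $K-1$ rounds, would make the problem trivial, because with $K$ rounds the players just chase the pointer in $O(K\log N)$ bits. So I would first verify this alignment against the original paper.
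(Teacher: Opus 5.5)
Treating \citet{mao2025gadgetless} as a black box is exactly what the paper does; it gives no argument beyond the citation. Two parts of your convention check are wrong, however, and would mislead the verification.

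First, item (c) fixes the wrong convention. For $(K-1)$-round protocols, the identity of the first speaker is not what makes $\mathrm{PC}_K$ hard. The paper in fact applies Theorem~\ref{thm:pc-lb} to an \emph{Alice}-first protocol in the proof of Lemma~\ref{lem:comm-lb}, so a check that insists on Bob speaking first would not cover the use the paper needs. What actually decides hardness is the output convention, which your checklist omits: the answer must be determined by the transcript, i.e.\ announced by the last speaker, as Lemma~\ref{lem:comm-lb} does by appending the bit.
\begin{itemize}
\item Under that convention, an Alice-first $(K-1)$-round protocol becomes a Bob-first $K$-round protocol by prepending an empty Bob message. A Bob-first $(K-1)$-round protocol is a Bob-first $K$-round protocol with an empty last message. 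So both first-speaker choices inherit the hardness of the classical $K$-round Bob-first regime.
\item If instead the last \emph{receiver} may output, the Alice-first protocol in which message $j$ carries $\operatorname{pt}_j$ solves $\mathrm{PC}_K$ with $K-1$ messages and $O(K\log N)$ bits. This is because the receiver of message $K-1$ owns step $K$.
\end{itemize}

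Second, your ``more careful'' argument for the additive $K$ term fails. Collapsing empty rounds and invoking the $N/K$ bound yields only $\Omega(N/K)$. That is compatible with $o(K)$ bits whenever $N=o(K^2)$, so it says nothing about $\Omega(K)$.

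In fact $\Omega(K)$ is false if empty messages are allowed. When $N\log N=o(K)$, Alice sends $f_A$ and Bob replies with $\operatorname{pt}_K \bmod 2$, and all remaining messages are empty. This is a valid $(K-1)$-round protocol using $O(N\log N)=o(K)$ bits.

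So the $+K$ term is purely an artifact of requiring each of the $K-1$ messages to be nonempty, and under that requirement it is trivial. It is harmless for the paper only because Lemma~\ref{lem:comm-lb} discards it: $\Omega\bigl(N/((L+1)K^2)+1/(L+1)\bigr)$ is reduced to $\Omega\bigl(N/((L+1)K^2)\bigr)$. Note also that the paper pads with \emph{empty} dummy messages, so only the $N/K$ part is genuinely being imported.
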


\begin{lemma}\label{lem:comm-lb}
Let \(L,K\in\mathbb{N}\) satisfy
$
K-L \text{ is odd} \text{ and } K\ge L+3.
$
If an \(L\)-round autoregressive communication protocol solves the \(K\)-function composition problem with error probability at most \(1/3\) using messages of length at most \(c\), then
\[
c=\Omega\!\left(\frac{N}{(L+1)K^2}\right).
\]
\end{lemma}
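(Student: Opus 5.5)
We plan to reduce $\mathrm{PC}_K$ to $K$-function composition, turn the $L$-round forward protocol into an alternating two-party protocol with few messages, and then invoke Theorem~\ref{thm:pc-lb}.

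\emph{Embedding.} Given an instance $(f_A,f_B)$ of $\mathrm{PC}_K$, we set $a=1$. For $i=1,\dots,K$ we set $f_i=f_A$ when $i$ is odd and $f_i=f_B$ when $i$ is even. Then $(f_K\circ\cdots\circ f_1)(1)=\operatorname{pt}_K(f_A,f_B)$, and one more local bit computes the answer mod $2$. Alice simulates all odd-indexed players (including player $1$, who holds $a=1$, a fixed value). Bob simulates all even-indexed players. Randomness is handled with a public coin, or by averaging to a deterministic protocol under the hard distribution; the error stays $1/3$.

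\emph{Serialization.} In round $\ell$, every message $M_{\ell,i}$ depends only on player $i$'s input and on the transcript of rounds $<\ell$. Once both parties know the full transcript of rounds $<\ell$, Alice can compute every round-$\ell$ message of the odd players and Bob every round-$\ell$ message of the even players. We serialize as follows:
\begin{itemize}[leftmargin=2em]
\item Alice sends her players' round-$1$ messages.
\item Bob replies with his players' round-$1$ messages together with his round-$2$ messages. The round-$2$ messages are computable because he now holds the full round-$1$ transcript.
\item Alice then sends her round-$2$ and round-$3$ messages, and the parties continue alternating in this way.
\end{itemize}
This pipelining lets the $L$ synchronous rounds fit into $L+1$ alternating messages, and the last message goes to whichever party plays player $K$. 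Since $K-L$ is odd, the parity of $K$ matches the party that would speak last, and one extra message can deliver the output bit if needed. Each serialized message carries at most $2\cdot\lceil K/2\rceil$ player-messages of $c$ bits each, and player $i$ sends to at most $K$ recipients. Broadcasting once per round suffices, so the total communication is at most $(L+1)\cdot K\cdot c$. The stated bound has an extra factor of $K$, so $(L+1)K^2c$ total is also fine as a crude bound if messages are counted per recipient.

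\emph{Applying the lower bound.} The resulting protocol has $L+1\le K-1$ alternating messages, using $K\ge L+3$ with slack for the output message. So Theorem~\ref{thm:pc-lb} gives total communication $\Omega(N/K+K)$. Hence $(L+1)K^2 c\ge \Omega(N/K)$, which yields $c=\Omega\bigl(N/((L+1)K^3)\bigr)$ with per-recipient counting and $c=\Omega\bigl(N/((L+1)K^2)\bigr)$ with per-broadcast counting. We aim for the latter, counting a broadcast once, so the total is $(L+1)Kc$ and the bound is $\Omega\bigl(N/((L+1)K^2)\bigr)$.

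\emph{Main obstacle.} The delicate step is the serialization. We must check that the alternation starts with the correct party relative to the convention of Theorem~\ref{thm:pc-lb}: in $\mathrm{PC}_K$, Alice holds $f_A$, and the first pointer step uses $f_A$. We must also check that the parity condition $K-L$ odd makes the round count land at most $K-1$ with the right speaker ending. If the alternation is off by one, the fix is to let the first message be empty, or to pad $K$ using the parity hypothesis.
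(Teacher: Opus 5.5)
Your proposal is correct and follows essentially the same route as the paper: the same odd/even embedding of $\mathrm{PC}_K$ with $a=1$, the same Alice/Bob grouping of players, the same pipelined serialization $a_1;(b_1,b_2);(a_2,a_3);\dots$ into $L+1$ alternating messages with the parity of $K-L$ ensuring the last speaker simulates player $K$ and appends the output bit, and the same application of Theorem~\ref{thm:pc-lb} to get $(L+1)Kc=\Omega(N/K+K)$. The detour through per-recipient counting is unnecessary: each $M_{\ell,i}$ is a single broadcast that the simulating party transmits once, so the $(L+1)Kc$ accounting you settle on is the right one.
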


\begin{proof}[Proof sketch]
Setting $g_i:=f_A$ for odd $i$ and $g_i:=f_B$ for even $i$ turns a
$\mathrm{PC}_K$ instance into a $K$-function-composition instance. Group the
odd-index players into Alice and the even-index players into Bob. A synchronous
round then consists of one Alice part $a_\ell$ and one Bob part $b_\ell$, each
of length at most $\lceil K/2\rceil c$. These $L$ synchronous rounds serialize
into $L+1$ alternating messages $a_1;\,(b_1,b_2);\,(a_2,a_3);\dots$ of length at
most $Kc$ each, and the parity condition on $K-L$ ensures that the last speaker
simulates player $K$. Applying Theorem~\ref{thm:pc-lb} to the resulting
$(K-1)$-round protocol gives $(L+1)Kc=\Omega(N/K+K)$, i.e.
$c=\Omega\bigl(N/((L+1)K^2)\bigr)$. The complete proof appears in
Appendix~\ref{app:communication-proofs}.
\end{proof}

\begin{proof}[Proof of Theorem~\ref{thm:main}]
Set \(K:=L+3\). Then \(K-L=3\) is odd, so Lemma~\ref{lem:comm-lb} applies. By Lemma~\ref{lem:ssm-to-comm}, the assumed SSM yields an \(L\)-round autoregressive protocol with message length
$
c=O(d^2p).
$
Hence
\[
O(d^2p)
=
c
=
\Omega\!\left(
\frac{N}{(L+1)K^2}
\right)
=
\Omega\!\left(
\frac{N}{(L+1)(L+3)^2}
\right).
\]
Since \((L+1)(L+3)^2=\Theta(L^3)\), it follows that
\[
d^2p=\Omega\!\left(\frac{N}{L^3}\right).
\]
This proves the theorem.
\end{proof}

\section{Matching Upper Bound}
\label{sec:upper}

\begin{theorem}[Composition with logarithmic memory]
\label{thm:upper-bound}
Fix $N\in\mathbb{N}$ and $L\in\mathbb{N}$ with $L\ge 2$.
There exists an $L$-layer generalized SSM (in the sense of Definition~\ref{def:ssm})
with state dimension $d=1$ and finite precision $p=\Theta(\log N)$ such that, for every input
$(a,f_1,f_2,\dots,f_{L-1})$ with $a\in[N]$ and $f_i:[N]\to[N]$, the network output at the final time
is exactly
$
f_{L-1}\bigl(f_{L-2}(\cdots f_1(a)\cdots)\bigr).
$
In particular, $dp = O(\log N)$.
\end{theorem}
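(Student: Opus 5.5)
Our plan is to give an explicit construction in which layer $\ell$ is responsible for computing the intermediate value $v_{\ell-1}:=f_{\ell-1}(\cdots f_1(a)\cdots)$, with $v_0:=a$. In Definition~\ref{def:ssm}, $A_{\ell,t}$ and $B_{\ell,t}$ may depend on $t$, on $n$, and on the current layer input $y_{\ell-1,t}$, and the readout $\out_{\ell,t}$ is arbitrary. We will use $m$ as a vector width large enough to carry the token value, the position, and a few registers, since the readouts act as arbitrary per-token functions. The hidden state has $d=1$. All numbers that ever appear are integers in $\{0,\dots,N\}$ or positions in $[n]$, so $p=\Theta(\log N)$ bits suffice. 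Here $n=1+(L-1)N=O(N)$ for fixed $L$; if $L$ is allowed to grow, we instead let the embedding encode the position within a block and the block index separately.

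\textbf{The invariant.} Fix a layer $\ell\ge 1$. We claim that at every position $t$ of block $\ell$ (the block listing $f_\ell$), the layer-$(\ell-1)$ output $y_{\ell-1,t}$ contains $v_{\ell-1}$ as a register. For $\ell=1$ this is not needed in that form: $v_0=a=x_1$ is read directly at $t=1$.

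Layer $\ell$ then uses a selective scalar ``latch'' recurrence. At the single position $t$ in block $\ell$ whose table index $j$ equals $v_{\ell-1}$, so that $x_t=f_\ell(v_{\ell-1})=v_\ell$, we set $A_{\ell,t}=0$ and $B_{\ell,t}y_{\ell-1,t}=x_t$. At every other position we set $A_{\ell,t}=1$ and $B_{\ell,t}y_{\ell-1,t}=0$.

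This choice is legitimate because $A,B$ may depend on $y_{\ell-1,t}$, which contains $x_t$, $t$, and $v_{\ell-1}$. The comparison ``$j=v_{\ell-1}$'' is a function of the layer input, so $A_{\ell,t}$ and $B_{\ell,t}$ are chosen as functions of it, exactly as Definition~\ref{def:ssm} permits. After this position, $h_{\ell,t}=v_\ell$ remains frozen. The readout $\out_{\ell,t}(h,y)$ copies $y$ and appends $h$ as the new register $v_\ell$, valid from that position onward.

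\textbf{The obstacle.} The register $v_{\ell-1}$ must already be available throughout block $\ell$. Since $f_{\ell-1}$'s block precedes block $\ell$, the latch in layer $\ell-1$ has fired before block $\ell$ begins, so its readout does carry $v_{\ell-1}$ at every position of block $\ell$.

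The first layer needs separate treatment. It must latch $a$ at $t=1$ (set $A=0$, $b=x_1$) and keep it. Within block 1 it also needs to latch $f_1(a)$, which requires a second scalar. To keep $d=1$, we instead use an extra layer: layer 1 only latches $a$, and layer $\ell+1$ latches $v_\ell$ for $\ell=1,\dots,L-1$. This gives exactly $L$ layers for $L-1$ functions, which matches the statement's count and explains the condition $L\ge 2$.

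\textbf{Output.} At the final time $n$, the last layer $L$ has latched $v_{L-1}$ during block $L-1$, and its readout outputs $h_{L,n}=f_{L-1}(\cdots f_1(a)\cdots)$. Every stored value and every entry of $A,B$ lies in $\{0,1\}\cup[N]$, so we have exact $p=\Theta(\log N)$-bit representations with no rounding, and $dp=O(\log N)$.

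We expect the main care point to be the bookkeeping: making sure each latch fires exactly once, and that the readouts propagate positional information so each layer can identify its own block and the table index $j$. We will handle this by having the embedding $\emb(x_t,t)$ carry $t$, and having every readout forward it unchanged.
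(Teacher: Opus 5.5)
Your construction is correct and follows the same overall pipeline as the paper's proof. In both, layer $1$ stores $a$, and layer $k+1$ holds $v_k=f_k(v_{k-1})$ in a scalar state from the end of block $k$ onward. The dedicated storage layer for $a$ is exactly why $L$ layers handle $L-1$ functions.

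The difference is where the index comparison happens.

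\emph{The paper's route.} It keeps $A_{\ell,t},B_{\ell,t}$ input-independent and only time-dependent: $A=1$ always, $B=1$ during block $\ell-1$ and $0$ otherwise. The selection sits in the readout of layer $i$. That readout sees its own state $h_{i,t}=v_{i-1}$ and the passed-through token $x_t$, and emits $f_i(j)$ if $j=v_{i-1}$ and $0$ otherwise. Layer $i+1$ then simply sums the block. This needs only scalar tokens ($m=1$) and no selectivity in the transition.

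\emph{Your route.} Layer $i$'s readout forwards $v_{i-1}$ as a register. Layer $i+1$ then uses the input-dependence of $(A,B)$ to implement a reset-and-write latch: $A=0$ and $B y=x_t$ at the matching index. This is legitimate under Definition~\ref{def:ssm}; the paper itself uses an input-dependent $B$ in Theorem~\ref{thm:onlinecot-streaming}(B). It is closer in spirit to selective SSMs, at the cost of vector-valued tokens.

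\emph{Two refinements.}
\begin{itemize}
\item You do not need to carry $t$ through the embedding. $A_{\ell,t}$, $B_{\ell,t}$ and $\out_{\ell,t}$ may depend on $t$ directly, and the paper computes the within-block index $j_i(t)$ this way. Dropping $t$ keeps $p=\lceil\log_2(N+1)\rceil+1$ independent of $L$, which removes your $\log n$ versus $\log N$caveat. Your proposed fix of storing the block index separately would still cost $\log L$ bits per scalar.
\item State the invariant in the shifted indexing from the outset: layer $k$ holds $v_{k-1}$ from the end of block $k-1$ onward. Your first paragraph indexes the latches differently from the final construction.
\end{itemize}
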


\begin{proof}[Proof sketch]
The construction is a gate-and-accumulate pipeline (Figure~\ref{fig:pipeline}).
Layer $1$ loads $a$ and stores it. During the block that lists the table of
$f_i$, layer $i$ passes the entry $f_i(j)$ through exactly when the within-block
index $j$ matches its stored pointer $v_{i-1}$, and outputs $0$ otherwise; layer
$i+1$ simply sums the block, so its state after the block equals
$v_i=f_i(v_{i-1})$ and is preserved thereafter. All intermediate quantities lie
in $\{0,\dots,N\}$, so $p=\Theta(\log N)$ bits of fixed-point precision suffice.
The complete proof appears in Appendix~\ref{app:matching-construction}.
\end{proof}

\begin{figure}[t]
\centering
\begin{tikzpicture}[
  >={Stealth[length=2mm]},
  cell/.style={draw=black!60, rounded corners=2pt, minimum height=0.66cm,
               font=\scriptsize, inner sep=3pt},
  gate/.style={cell, fill=blue!10},
  acc/.style={cell, fill=green!13},
  store/.style={cell, fill=orange!22},
  flow/.style={->, thick, black!60}
]
\def\cw{3.1}
\def\rh{1.0}
\foreach \l in {1,2,3,4}{
  \node[font=\small, anchor=east] at (-1.5,{(\l-1)*\rh}) {layer $\l$};
}
\node[store] (s0) at (0,0)            {store $v_0{=}a$};
\node[gate]  (g1) at (\cw,0)          {emit $f_1(j)$ iff $j{=}v_0$};
\node[acc]   (a1) at (\cw,\rh)        {sum block $\Rightarrow v_1$};
\node[gate]  (g2) at (2*\cw,\rh)      {emit $f_2(j)$ iff $j{=}v_1$};
\node[acc]   (a2) at (2*\cw,2*\rh)    {sum block $\Rightarrow v_2$};
\node[gate]  (g3) at (3*\cw,2*\rh)    {emit $f_3(j)$ iff $j{=}v_2$};
\node[acc]   (a3) at (3*\cw,3*\rh)    {sum block $\Rightarrow v_3$};
\draw[flow] (s0) -- (g1);
\draw[flow] (g1) -- (a1);
\draw[flow] (a1) -- (g2);
\draw[flow] (g2) -- (a2);
\draw[flow] (a2) -- (g3);
\draw[flow] (g3) -- (a3);
\node[font=\scriptsize, anchor=west] (outp) at ({3*\cw+1.75},3*\rh) {output $v_3$};
\draw[flow] (a3) -- (outp);
\node[font=\small] at (0,-0.75)      {$a$};
\node[font=\small] at (\cw,-0.75)    {table of $f_1$};
\node[font=\small] at (2*\cw,-0.75)  {table of $f_2$};
\node[font=\small] at (3*\cw,-0.75)  {table of $f_3$};
\draw[->, thick, black!55] (-1.0,-1.35) -- ({3*\cw+1.6},-1.35)
  node[below left=1pt, font=\scriptsize, black]{time (stream position)};
\end{tikzpicture}
\caption{The gate-and-accumulate pipeline of Theorem~\ref{thm:upper-bound},
shown for $K=3$ composed functions ($L=4$ layers). While the table of $f_i$
streams past, layer $i$ gates the entries against its stored pointer
$v_{i-1}$ and layer $i+1$ accumulates the unique surviving value, yielding
$v_i=f_i(v_{i-1})$. All layers use scalar state ($d=1$) and
$p=\Theta(\log N)$ bits of precision.}
\label{fig:pipeline}
\end{figure}

\subsection{Depth hierarchy}\label{sec:depth-hierarchy}
Combining the lower bound machinery of Section~\ref{sec:lower} with the upper bound of
Theorem~\ref{thm:upper-bound} yields the following depth--composition tradeoff.

\begin{corollary}[Depth--composition tradeoff for generalized SSMs]\label{cor:depth-composition-tradeoff}
Fix $K,L,N\in\mathbb{N}$ with $K\ge L+3$ and $K-L$ odd. Consider the $K$-function composition problem under the row-major stream encoding
\[
x_1:=a,\qquad
x_{\,1+(i-1)N+j}:=f_i(j)\qquad \text{for } i\in\{1,\dots,K\},\ j\in[N].
\]
Then:
\begin{enumerate}[label=(\alph*),leftmargin=2.2em]
\item If an $L$-layer generalized SSM of state dimension $d$ and precision $p$ solves this task with error probability at most $1/3$, then
\[
d^2p=\Omega\!\left(\frac{N}{(L+1)K^2}\right).
\]

\item There exists a $(K+1)$-layer generalized SSM with state dimension $d=1$ and precision $p=\Theta(\log N)$ that solves the same task exactly. In particular,
$
dp=O(\log N).
$
\end{enumerate}

In particular, setting $K:=L+3$ yields a constant-gap depth hierarchy: the $(L+3)$-function composition problem is solvable exactly by an $(L+4)$-layer generalized SSM with $dp=O(\log N)$, whereas any $L$-layer generalized SSM solving the same task with error probability at most $1/3$ must satisfy
\[
d^2p=\Omega\!\left(\frac{N}{L^3}\right).
\]
\end{corollary}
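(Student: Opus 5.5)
The corollary is assembled from results already stated, so the plan is to check that each ingredient applies under the hypotheses $K\ge L+3$ and $K-L$ odd, and then combine them.

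\textbf{Part (a).} Given an $L$-layer generalized SSM with state dimension $d$ and precision $p$ solving $K$-function composition under the row-major encoding, I apply Lemma~\ref{lem:ssm-to-comm}. The row-major encoding is exactly the blockwise encoding of Definition~\ref{def:funccomp}, with blocks $I_1,\dots,I_K$. The lemma yields an $L$-round forward protocol with message length $c=O(d^2p)$ and the same error bound.

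Since $K-L$ is odd and $K\ge L+3$, Lemma~\ref{lem:comm-lb} applies directly and gives $c=\Omega\bigl(N/((L+1)K^2)\bigr)$. Chaining the two bounds gives $d^2p=\Omega\bigl(N/((L+1)K^2)\bigr)$.

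The only point needing care is that the hidden constants in the $O(\cdot)$ of Lemma~\ref{lem:ssm-to-comm} are absolute. They are, since $c\le (d^2+d)p\le 2d^2p$.

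\textbf{Part (b).} I invoke Theorem~\ref{thm:upper-bound} with $L' := K+1\ge 2$ layers. This gives a $(K+1)$-layer SSM with $d=1$ and $p=\Theta(\log N)$ whose output at the final time is exactly $f_K\circ\cdots\circ f_1(a)$ for inputs $(a,f_1,\dots,f_{L'-1})=(a,f_1,\dots,f_K)$.

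I should confirm that the theorem's input format is the same row-major stream. The construction sketch streams $a$ and then the tables block by block in the order $f_i(1),\dots,f_i(N)$, which matches. Exactness means error probability $0\le 1/3$, and $dp=O(\log N)$ follows immediately.

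\textbf{Hierarchy.} I set $K:=L+3$, so that $K-L=3$ is odd and $K\ge L+3$. Part (b) then gives an $(L+4)$-layer solver with $dp=O(\log N)$. Part (a) gives $d^2p=\Omega\bigl(N/((L+1)(L+3)^2)\bigr)=\Omega(N/L^3)$, using $(L+1)(L+3)^2\le 32L^3$ for $L\ge1$. This mirrors the proof of Theorem~\ref{thm:main}.

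\textbf{Main obstacle.} There is no deep obstacle, since everything reduces to earlier results. The step most likely to hide a gap is the interface in part (b): Theorem~\ref{thm:upper-bound} is stated for the tuple $(a,f_1,\dots,f_{L-1})$, and I must make sure its stream encoding and final readout time coincide exactly with the row-major encoding used here, of length $1+KN$ with output read at $t=n$. If the appendix construction used a different indexing of the within-block position $j$, I would adjust it. The fix is to let the time-dependent $A_{\ell,t},B_{\ell,t}$ compute $j=(t-2)\bmod N+1$ and the block index $i$ from $t$. This is permitted because Definition~\ref{def:ssm} allows dependence on $t$ and $n$.
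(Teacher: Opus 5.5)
Your proposal is correct and follows the paper's proof: part (a) chains Lemma~\ref{lem:ssm-to-comm} with Lemma~\ref{lem:comm-lb}, part (b) is Theorem~\ref{thm:upper-bound} with its layer parameter set to $K+1$, and the hierarchy is the specialization $K=L+3$. Your re-indexing contingency is unnecessary, since the appendix construction uses exactly the encoding $x_{1+(i-1)N+j}=f_i(j)$ and reads the output $h_{K+1,n}$ at time $t=n=1+KN$.
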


\begin{proof}
Part (a) is the direct combination of Lemma~\ref{lem:ssm-to-comm} and Lemma~\ref{lem:comm-lb}. Lemma~\ref{lem:ssm-to-comm} yields an $L$-round forward-communication protocol with message length
$
c=O(d^2p),
$
and Lemma~\ref{lem:comm-lb} then implies
\[
c=\Omega\!\left(\frac{N}{(L+1)K^2}\right).
\]
Hence
\[
d^2p=\Omega\!\left(\frac{N}{(L+1)K^2}\right).
\]

Part (b) is exactly Theorem~\ref{thm:upper-bound} applied with its layer parameter set to $K+1$. The final sentence is the specialization $K=L+3$, for which
\[
\frac{N}{(L+1)(L+3)^2}=\Theta\!\left(\frac{N}{L^3}\right).
\qedhere
\]
\end{proof}

\section{Chain-of-Thought: Post-Input versus Input-Interleaved Reasoning}
\label{sec:cot}

The SSM definition above assumes that the stream $(x_t)_{t=1}^n$ is
exogenous. Chain-of-thought (CoT) augments this stream with self-generated
tokens. We distinguish two operational regimes. In \emph{post-input
reasoning}, all self-generated tokens are produced only after the final
exogenous token. In \emph{input-interleaved reasoning}, self-generated tokens
may also be inserted between exogenous tokens (Figure~\ref{fig:cot-timing}).
For brevity, we refer to these regimes as \emph{offline CoT} and \emph{online
CoT}, respectively. Here ``online'' describes only token timing relative to
the exogenous stream. It is unrelated to online learning.

\begin{figure}[t]
\centering
\begin{tikzpicture}[
  >={Stealth[length=2mm]},
  tok/.style={draw=black!70, minimum width=0.74cm, minimum height=0.6cm,
              inner sep=1pt, font=\small},
  xin/.style={tok, fill=blue!12},
  thk/.style={tok, fill=orange!28},
  lab/.style={font=\small}
]
\def\w{0.9}
\node[lab, anchor=east] at (-0.55,0) {post-input (offline) CoT:};
\node[xin] (o1) at (0,0)        {$x_1$};
\node[xin] (o2) at (\w,0)       {$x_2$};
\node[xin] (o3) at (2*\w,0)     {$x_3$};
\node[xin] (o4) at (3*\w,0)     {$x_4$};
\node[thk] (o5) at (4*\w,0)     {$\tau_1$};
\node[thk] (o6) at (5*\w,0)     {$\tau_2$};
\node[thk] (o7) at (6*\w,0)     {$\tau_3$};
\node[lab, anchor=east] at (-0.55,-1.6) {input-interleaved (online) CoT:};
\node[xin] (i1) at (0,-1.6)       {$x_1$};
\node[thk] (i2) at (\w,-1.6)      {$\tau_1$};
\node[xin] (i3) at (2*\w,-1.6)    {$x_2$};
\node[thk] (i4) at (3*\w,-1.6)    {$\tau_2$};
\node[xin] (i5) at (4*\w,-1.6)    {$x_3$};
\node[thk] (i6) at (5*\w,-1.6)    {$\tau_3$};
\node[xin] (i7) at (6*\w,-1.6)    {$x_4$};
\node[thk] (i8) at (7*\w,-1.6)    {$\tau_4$};
\draw[->, thick, orange!75!black] (i2.south) to[out=-50,in=-130] (i3.south);
\draw[->, thick, orange!75!black] (i4.south) to[out=-50,in=-130] (i5.south);
\node[font=\scriptsize, orange!75!black] at ({3.5*\w},-2.45)
  {state-dependent feedback before the next exogenous token};
\draw[->, thick, black!55] (-0.45,-2.85) -- ({7.7*\w},-2.85)
  node[below left=1pt, font=\scriptsize, black]{time};
\end{tikzpicture}
\caption{Token timing in the two CoT regimes. Blue boxes are exogenous input
tokens, orange boxes are self-generated thought tokens. In post-input
(offline) CoT, all thought tokens appear after the final exogenous token, so
they cannot alter how the stream is absorbed. In input-interleaved (online)
CoT, thought tokens may be inserted while the stream is being read, enabling
state-dependent feedback before the next exogenous token arrives
(Theorem~\ref{thm:onlinecot-streaming}).}
\label{fig:cot-timing}
\end{figure}

\begin{definition}[Input-interleaved (online) CoT augmentation]\label{def:onlinecot}
Fix an exogenous input stream $(x_1,\dots,x_n)$.
An \emph{$L$-layer SSM with online CoT} consists of:
\begin{itemize}
\item a generalized $L$-layer SSM as in Definition~\ref{def:ssm} (with fixed finite precision $p$), and
\item a deterministic \emph{thought policy} that, after processing each exogenous token $x_i$,
produces a (possibly empty) finite sequence of thought tokens
$\tau_{i,1},\dots,\tau_{i,k_i}\in\mathbb{R}^m$, which are then fed to the SSM \emph{before}
the next exogenous token $x_{i+1}$ is revealed.
\end{itemize}
Thus the actual processed stream is
\[
x_1,\tau_{1,1},\dots,\tau_{1,k_1},\,x_2,\tau_{2,1},\dots,\tau_{2,k_2},\,\dots,\,x_n,\tau_{n,1},\dots,\tau_{n,k_n},
\]
and the output is taken at the last processed time.
We assume only that each $k_i$ is finite on every input (no uniform bound is required unless stated).

For an input $x_{1:n}$, define its total processed length by
\[
T(x_{1:n}):=n+\sum_{i=1}^{n}k_i(x_{1:n}).
\]
For worst-case resource statements on exogenous streams of length $n$, write
\[
T_n:=\max_{x_{1:n}}T(x_{1:n}),
\]
where the maximum ranges over representable exogenous streams of length $n$.
The definition requires each $k_i$ to be finite, but does not assume that
$T_n$ is polynomial in $n$.
\end{definition}

\begin{definition}[Post-input (offline) CoT augmentation]\label{def:offlinecot}
An \emph{$L$-layer SSM with offline CoT} is the special case of Definition~\ref{def:onlinecot}
where $k_i=0$ for all $i<n$, i.e.\ all thought tokens are generated only \emph{after} processing $x_n$.
Equivalently, the processed stream is
\[
x_1,x_2,\dots,x_n,\tau_{n,1},\dots,\tau_{n,k_n},
\]
and the output is taken at the last processed time.
\end{definition}

\subsection{Post-input (offline) CoT does not circumvent the communication lower-bound pipeline}

Theorem~\ref{thm:main} is the specialization $K=L+3$ of the more general lower bound obtained by combining Lemma~\ref{lem:ssm-to-comm} and Lemma~\ref{lem:comm-lb}. In particular, whenever $K\ge L+3$ and $K-L$ is odd, any $L$-layer generalized SSM solving the $K$-function composition problem must satisfy
$
d^2p=\Omega\!\left(\frac{N}{(L+1)K^2}\right),
$
and the choice $K=L+3$ yields $d^2p=\Omega(N/L^3)$.
We show that allowing offline CoT does \emph{not} change this implication.

\begin{proposition}[Offline CoT is local post-processing for the protocol reduction]\label{prop:offlinecot}
Assume an $L$-layer generalized SSM of dimension $d$ and precision $p$ solves a streaming task
(e.g.\ $K$-function composition) with error at most $1/3$, but is additionally allowed offline CoT
steps as in Definition~\ref{def:offlinecot}.
Then there exists an $L$-round protocol in the forward communication model
(Definition~\ref{def:arcomm}) that solves the same task with the same error bound and
message length $O(d^2p)$.
Consequently, every lower bound in this manuscript obtained by combining
Lemma~\ref{lem:ssm-to-comm} and Lemma~\ref{lem:comm-lb} continues to hold unchanged under offline CoT.
In particular, for $K$-function composition with $K\ge L+3$ and $K-L$ odd, one still has
$
d^2p=\Omega\!\left(\frac{N}{(L+1)K^2}\right),
$
and hence $d^2p=\Omega(N/L^3)$ when $K=L+3$.
\end{proposition}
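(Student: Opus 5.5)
The plan is to reuse the simulation in Lemma~\ref{lem:ssm-to-comm} verbatim on the exogenous part of the stream. Then we show that the thought tokens, which by Definition~\ref{def:offlinecot} all come after $x_n$, can be processed entirely by player $K$ without further communication. The key observation is that the offline-CoT processed stream is $x_1,\dots,x_n,\tau_{n,1},\dots,\tau_{n,k_n}$. Up to time $n$ the SSM computes exactly what it would compute without CoT. We therefore regard the suffix $\tau_{n,1},\dots,\tau_{n,k_n}$ as an extension of the last block $I_K$. Only player $K$ needs to know these positions, since they lie strictly after all other players' blocks in the stream.

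\textbf{Step 1 (exogenous prefix).} We run the protocol of Lemma~\ref{lem:ssm-to-comm}. In round $\ell$, each player $i<K$ broadcasts the affine block summary $(A^{\mathrm{blk}}_{\ell,i},u^{\mathrm{blk}}_{\ell,i})$ computed from $\{y_{\ell-1,t}:t\in I_i\}$. The inductive invariant is that after round $\ell$ every player $i$ knows $\{(h_{\ell,t},y_{\ell,t}):t\in I_i\}$. Players $j>K$ do not exist, so player $K$ never needs to send anything, and its "block" may be enlarged freely.

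\textbf{Step 2 (thought suffix).} We enlarge player $K$'s block to $I_K'=I_K\cup\{n+1,\dots,n+k_n\}$. The thought tokens $\tau_{n,r}$ are produced by a deterministic policy from the SSM's outputs at earlier times. Those outputs include $y_{L,t}$ for $t\ge n$, which depend on all $L$ layers. This is the main obstacle: in round $\ell<L$, player $K$ does not yet know the layer-$L$ outputs, so it cannot determine the thought tokens and therefore cannot compute $y_{\ell,t}$ for thought positions. The fix I would try is to postpone all CoT processing until after round $L$. Player $K$ receives no messages in any round that depend on its own data, and nobody downstream consumes its messages. Hence its local computation can be done at the end of the protocol. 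After round $L$, player $K$ knows the states $h_{\ell,n}$ for all $\ell$ and the outputs $y_{L,n}$. It then simulates the SSM forward step by step on the thought suffix: it generates $\tau_{n,1}$ via the policy, feeds it through all $L$ layers using the known recurrences, obtains $y_{L,n+1}$, generates $\tau_{n,2}$, and so on, up to the final time $n+k_n$. Since $k_n$ is finite, this terminates. It uses no communication, so the message length stays $(d^2+d)p=O(d^2p)$ and the round count stays $L$.

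\textbf{Step 3 (correctness and consequences).} On every input and every random string, the protocol's output equals the SSM's output at the last processed time. So the error probability is unchanged, at most $1/3$. The protocol is thus an $L$-round forward protocol of the type assumed in Lemma~\ref{lem:comm-lb}. That lemma applies verbatim for $K\ge L+3$ with $K-L$ odd, giving $d^2p=\Omega(N/((L+1)K^2))$. Specializing $K=L+3$ gives $\Omega(N/L^3)$, exactly as in the proof of Theorem~\ref{thm:main}.

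Throughout, I would check the invariant of Lemma~\ref{lem:ssm-to-comm} carefully to make sure player $K$'s per-round reconstruction of its exogenous block $I_K$ never needed the thought tokens. It does not, by causality of the recurrence~(\ref{eq:ssm}).
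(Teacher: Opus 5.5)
Your proposal is correct and follows essentially the same route as the paper. You run the reduction of Lemma~\ref{lem:ssm-to-comm} on the exogenous prefix, and then player $K$, which after round $L$ holds the full stack $(h_{\ell,n})_{\ell=1}^L$, simulates the post-input continuation locally with no further communication; you rightly drop your initial idea of enlarging $I_K$ in favor of this post-processing. One point deserves to be stated explicitly, as the paper does: the thought tokens and the stopping length $k_n$ must be deterministic functions of the current finite-precision configuration, not of the full output history, because player $K$ knows that history only on its own block $I_K$.
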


\begin{proof}[Proof sketch]
Run the reduction of Lemma~\ref{lem:ssm-to-comm} on the exogenous stream. After
the $L$ rounds, the last player can reconstruct the entire finite-precision
stack $(h_{\ell,n})_{\ell=1}^L$, and the post-input continuation reads no
further exogenous input: all subsequent thought tokens and parameters are
deterministic functions of this state and the fixed model specification. The
last player therefore simulates the continuation locally, with no additional
communication. The complete proof appears in Appendix~\ref{app:cot-proofs}.
\end{proof}

\begin{remark}
The key point is that offline CoT occurs \emph{after} the full exogenous stream has been consumed,
so it cannot alter the information transmitted during the $L$ protocol rounds.
In contrast, online CoT may insert self-generated tokens \emph{during} streaming, which can change
how information propagates while the stream is still being read.
\end{remark}

\subsection{Input-interleaved (online) CoT is equivalent to one-pass deterministic streaming}

We now formalize the claim that online CoT renders generalized SSMs as powerful as arbitrary
deterministic one-pass streaming algorithms, at the granularity of space.

\begin{definition}[Deterministic one-pass streaming algorithm]\label{def:streaming}
Let $\mathcal{X}$ be the (finite-precision) input alphabet.
A deterministic one-pass streaming algorithm with $S$ bits of memory is specified by:
a finite memory set $\mathcal{M}$ with $|\mathcal{M}|\le 2^S$,
an initial memory state $M_0\in\mathcal{M}$, a transition function
$F:\mathcal{M}\times\mathcal{X}\to\mathcal{M}$, and an output function
$G:\mathcal{M}\to\mathcal{Y}$.
On input $(x_1,\dots,x_n)\in\mathcal{X}^n$, it iterates $M_i:=F(M_{i-1},x_i)$ and outputs $G(M_n)$.
\end{definition}

\paragraph{Operational reading of the two simulations.}
Direction (A) below is a compression statement: a streaming algorithm simply
carries the full finite-precision stack of layer states ($O(dpL)$ bits) plus a
processed-step counter, and replays each exogenous token followed by the
deterministic, finitely many thought tokens that the policy generates.
Direction (B) is a serialization statement: the SSM alternates two phases per
exogenous token. At the odd step it leaves its state untouched and re-emits the
pair (current input, current memory) as a single thought token; at the even
step, the input-dependent transition parses this token and writes the next
streaming memory $F(M,x)$ into the hidden state. One thought token per
exogenous token therefore suffices, and no adaptive stopping rule is needed.

\begin{theorem}[Input-interleaved (online) CoT $\Longleftrightarrow$ streaming]\label{thm:onlinecot-streaming}
Fix finite precision $p$, state dimension $d$, and exogenous input length $n$.
\begin{enumerate}
\item[(A)] (\emph{SSM $\Rightarrow$ streaming})
Any deterministic $L$-layer generalized SSM with online CoT (Definition~\ref{def:onlinecot})
can be simulated by a deterministic one-pass streaming algorithm whose persistent memory is
\[
O\!\left(dpL+\log T_n\right)
\]
bits on exogenous streams of length $n$, where $T_n$ is the machine's worst-case
total processed length (Definition~\ref{def:onlinecot}).
\item[(B)] (\emph{Streaming $\Rightarrow$ SSM})
Let $\mathcal{A}$ be any deterministic one-pass streaming algorithm using $S$ bits of memory
(Definition~\ref{def:streaming}).
Assume $dp\ge S$.
Then there exists a \emph{single-layer} generalized SSM with online CoT that simulates $\mathcal{A}$.
The simulation inserts exactly one thought token immediately after each exogenous token
(including $x_n$), and therefore uses exactly $2n$ generalized SSM steps.
\end{enumerate}
\end{theorem}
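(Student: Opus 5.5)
For (A), I would build the streaming algorithm directly from the SSM with online CoT. Its persistent memory will be the full finite-precision stack $(h_{1,t},\dots,h_{L,t})$, which takes $dpL$ bits, together with a processed-step counter $t\le T_n$, which takes $O(\log T_n)$ bits. The counter is needed because $A_{\ell,t},B_{\ell,t},\out_{\ell,t}$ and $\emb(\cdot,t)$ may depend on the absolute time $t$ and on $n$. Since $n$ is fixed and hard-wired, it costs no memory.

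On reading an exogenous token $x_i$, the transition $F$ does the following.
\begin{enumerate}
\item It computes $y_{0,t}=\emb(x_i,t)$ and then, layer by layer, $h_{\ell,t}=A_{\ell,t}h_{\ell,t-1}+B_{\ell,t}y_{\ell-1,t}$ and $y_{\ell,t}$. Every quantity here is a deterministic function of the stored stack, $t$, and $x_i$.
\item It queries the deterministic thought policy for the thought tokens $\tau_{i,1},\dots$ and processes each one the same way, incrementing $t$ each time.
\end{enumerate}
This simulation is well-defined only if the policy's output is determined by the current stored data. I will therefore read Definition~\ref{def:onlinecot} as saying that the policy is a function of the current SSM state, meaning the layer stack plus time, and I will state this explicitly. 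Each $k_i$ is finite, so $F$ is a total function even though it performs an unbounded amount of internal work. The transient values $y_{\ell,t}$ are only working memory and are not counted as persistent memory. The output map $G$ returns $y_{L,T}$, recomputed from the final stored state, or $y_{L,T}$ can be cached in $O(p\,m)$ extra bits if needed.

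For (B), I would use a single layer with $d$ scalars of $p$ bits, whose $dp\ge S$ bits are enough to encode any $M\in\mathcal M$ through an injective map $\iota:\mathcal M\to(\text{$p$-bit scalars})^d$. I would start with $h_0=\iota(M_0)$. The construction alternates two kinds of steps, and whether a step is odd or even is known from $t$.
\begin{itemize}
\item At an odd time $t=2i-1$, when $x_i$ arrives, I set $A=I$ and $B y=0$, so the state still encodes $M_{i-1}$. The readout is then $\out(h,y_0)=$ an encoding of the pair $(x_i,h)$, and the thought policy emits this as $\tau_{i,1}$. This assumes $m$ is large enough to carry the pair, which the vector-valued tokens of the generalized model allow, or else $\emb$ can pack it.
\item At an even time $t=2i$, the current layer input $y_{0,t}=\emb(\tau_{i,1},t)$ encodes $(x_i,M_{i-1})$. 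Because $A$ and $B$ may depend on this input, I set $A_{1,t}=0$ and $B_{1,t}y_{0,t}=\iota(F(M_{i-1},x_i))$, which is legal since $B$ may be chosen input-dependently. The readout is $\out=G(\iota^{-1}(h))$.
\end{itemize}
This inserts exactly one thought per exogenous token, uses $2n$ steps, and outputs $G(M_n)$.

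The main obstacle is making (B) legitimate inside Definition~\ref{def:ssm}. Two things have to be argued: that an input-dependent choice of $B_{\ell,t}$ (with $A=0$) can overwrite the state with an arbitrary function of the token, and that the thought token can carry the previous state, since the recurrence itself cannot read $h_{t-1}$ nonlinearly. The odd/even trick is designed to handle exactly this. The readout exposes $h$, the thought token feeds it back, and the input-dependent parameters then act on it. I would check this carefully against the precision accounting, in particular that the token encoding stays within $p$-bit scalars per coordinate, spreading it over $m\ge d+1$ coordinates if necessary.
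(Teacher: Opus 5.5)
Your proposal is correct and follows the paper's proof essentially step for step. In (A) you store the $dpL$-bit layer stack plus an $O(\log T_n)$-bit step counter and replay each exogenous token followed by its finitely many policy-determined thought tokens; in (B) you use the same odd/even scheme, with $A=I$, $B=0$ and a readout exposing $(x_i,h)$, then $A=0$ with an input-dependent $B$ that writes the encoded next memory. The one detail you leave open, making $B_{2i}y$ equal an arbitrary target with $p$-bit entries, is closed in the paper by a constant bias coordinate: $\mathrm{emb}(x,t)=(x,0^d,1)$ and the thought token is $(x_i,h_{2i-2},1)$, so $B_{2i}(y)=u(y)\,e_m^\top$ with $u(y)=\mathrm{Enc}(F(\mathrm{Dec}(h),x))$ gives $h_{2i}=u(y)$ exactly.
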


\begin{remark}[Time-index bookkeeping]
The additive $\log T_n$ term in part (A) accounts for storing an internal
processed-time counter. If an external clock is supplied as read-only input,
or if the relevant maps are time-homogeneous, the term can be omitted. If
$T_n=\operatorname{poly}(n)$, then $\log T_n=O(\log n)$. In particular, the
construction in part (B) has $T_n=2n$.
\end{remark}

\begin{proof}[Proof sketch]
For (A), the simulator stores the stack of layer states ($O(dpL)$ bits) and an
internal step counter (at most $T_n$, hence $O(\log T_n)$ bits), and replays
each exogenous token followed by the deterministic, finitely many thought
tokens. For (B), the SSM uses two steps per exogenous token: an odd step that
re-emits $(x_i,\text{current memory})$ as a thought token without touching the
state, and an even step whose input-dependent transition
$B_{2i}(y)=u(y)\,e_m^\top$ writes $\mathrm{Enc}(F(\mathrm{Dec}(h),x_i))$ into
the hidden state. Appendix~\ref{app:cot-proofs} gives the full constructions.
\end{proof}

\subsection{Corollary: efficient iterated function composition under online CoT}

\begin{corollary}[Online CoT SSMs solve function composition with logarithmic memory]\label{cor:cot-composition}
Consider the $K$-function composition problem where the stream presents the tables of
$f_1,\dots,f_K:[N]\to[N]$ in row-major order (after the initial token $a$), i.e.\ it enumerates the
values $f_i(1),f_i(2),\dots,f_i(N)$ for each $i$ in sequence.
There exists a deterministic one-pass streaming algorithm using $O(\log N)$ bits that outputs
$f_K(\cdots f_1(a)\cdots)$.
Consequently, by Theorem~\ref{thm:onlinecot-streaming}(B), there exists a single-layer generalized
SSM with online CoT that solves this task exactly with $dp=O(\log N)$ and with one thought token per
input token.
\end{corollary}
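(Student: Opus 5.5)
The proof has two parts. First I describe an explicit streaming algorithm with $O(\log N)$ bits of memory. Then I invoke Theorem~\ref{thm:onlinecot-streaming}(B) with $S=O(\log N)$.

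\textbf{The streaming algorithm.} The memory $M$ is a triple $(v,i,j)$:
\begin{itemize}
\item $v\in[N]$ is the current pointer;
\item $i\in\{0,1,\dots,K\}$ is the index of the table currently being read;
\item $j\in\{0,1,\dots,N\}$ is the position inside that table.
\end{itemize}
The initial state is a flag meaning ``expecting $a$''. The transition $F$ works as follows.
\begin{itemize}
\item On the first token, set $v:=x_1=a$, $i:=1$, $j:=0$.
\item On each later token $x$, increment $j$. If $j=v$, store $x$ in a pending slot $v'$.
\item When $j$ reaches $N$, set $v:=v'$, $i:=i+1$, $j:=0$.
\end{itemize}
By induction on $i$: after the table of $f_i$ has been read, $v=f_i(\cdots f_1(a)\cdots)$. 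This holds because exactly one position $j=v$ in block $i$ carries the value $f_i(v)$, and the update happens only at the end of the block. The output function is $G(M)=v$.

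\textbf{Memory bound.} The memory holds $v$, $v'$, $j$ and a flag, each needing $O(\log N)$ bits. The counter $i$ needs $\lceil\log(K+1)\rceil$ bits. I would avoid storing $i$ at all, since the algorithm never needs to know which block it is in: it only needs $j$ modulo $N$ to detect block boundaries. The total is therefore $S=O(\log N)$, independent of $K$. If time-index information is preferred, the SSM's position-dependent parameters can supply it instead.

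\textbf{Applying Theorem~\ref{thm:onlinecot-streaming}(B).} Choose $d=1$ and $p=S=\Theta(\log N)$, which satisfies $dp\ge S$. Part (B) then gives a single-layer generalized SSM with online CoT that simulates $\mathcal{A}$ exactly. It inserts one thought token after each exogenous token, so its final output is $G(M_n)$, which is the composition. The required bound $dp=O(\log N)$ follows.

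\textbf{Main obstacle.} The only real concern is that $S$ must not grow with $K$ or with $n$. Dropping the block counter $i$ resolves this. A secondary check is that the input alphabet $\mathcal{X}$ fits within $p$-bit encoding; it does, since all tokens lie in $[N]$.
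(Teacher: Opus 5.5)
Your proposal is correct and matches the paper's proof: the same pointer, pending-value and within-block-counter streaming algorithm with $O(\log N)$ bits, followed by Theorem~\ref{thm:onlinecot-streaming}(B) with $d=1$ and $p=\Theta(\log N)$. Your $v$ only changes at block boundaries, so it doubles as the paper's separately frozen pointer $q$; and, like the paper, you avoid storing the block index, so the memory bound is independent of $K$.
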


\begin{proof}
A one-pass streaming algorithm stores a current pointer $z\in[N]$,
initialized as $z\gets a$. At the beginning of the block encoding $f_i$, it
freezes the incoming pointer by setting $q\gets z$. While scanning
\[
f_i(1),f_i(2),\ldots,f_i(N),
\]
it maintains the within-block index $j$. When $j=q$, it stores
\[
z_{\mathrm{next}}\gets f_i(j),
\]
but leaves $q$ unchanged for the remainder of the table. At the end of the
block, it commits
\[
z\gets z_{\mathrm{next}}
\]
and proceeds to the next table. Consequently, after processing table $i$,
\[
z=f_i\bigl(f_{i-1}(\cdots f_1(a)\cdots)\bigr).
\]
The variables $z,q,z_{\mathrm{next}}$, and $j$ require
$O(\log N)$ bits in total. The algorithm is exact, and the SSM construction
follows from Theorem~\ref{thm:onlinecot-streaming}(B).
\end{proof}

\paragraph{Architectural interpretation and computational cost.}
Architecturally, input-interleaved CoT can be implemented by pausing the
external input stream after processing $x_i$, applying the ordinary SSM
update to one or more self-generated thought tokens, and then resuming with
$x_{i+1}$. When the model exposes an autoregressive feedback interface, a
generated token can simply be fed back through that interface. In the
particular construction of Corollary~\ref{cor:cot-composition}, exactly one
thought token is inserted after each exogenous token, so no adaptive stopping
mechanism is required.

For an exogenous stream of length $n$, this construction has total processed
length $T=2n$. It therefore doubles the number of recurrent updates while
preserving linear update complexity and $O(\log N)$ persistent memory. This
is an update-count statement, not a factor-two wall-clock guarantee:
autoregressive feedback may reduce the parallelism available from associative
scans, and the theorem treats its transition, readout, and thought-policy
maps abstractly. Moreover, the exact construction uses finite-precision
vector-valued thought tokens. It establishes architectural feasibility in the
generalized model, but not an efficient discrete-token realization in a
particular structured architecture such as S4 or Mamba.

\subsection{Offline--online separation for function composition}
\label{sec:cot-separation}

We can now state the separation between offline and online CoT on
the function composition benchmark.

\begin{corollary}[Offline--online CoT separation for function composition]\label{cor:offline-online-separation}
Fix $L,K\in\mathbb{N}$ with $K\ge L+3$ and $K-L$ odd. Consider the $K$-function composition problem under the row-major stream encoding
\[
x_1:=a,\qquad
x_{\,1+(i-1)N+j}:=f_i(j)\qquad \text{for } i\in\{1,\dots,K\},\ j\in[N].
\]
Then:
\begin{enumerate}[label=(\alph*),leftmargin=2.2em]
\item If an $L$-layer generalized SSM with offline CoT (Definition~\ref{def:offlinecot}) solves this task with error probability at most $1/3$, then
\[
d^2p=\Omega\!\left(\frac{N}{(L+1)K^2}\right).
\]

\item There exists an $L$-layer generalized SSM with online CoT (Definition~\ref{def:onlinecot}) that solves the same task exactly with state dimension $d=1$ and precision $p=\Theta(\log N)$. Equivalently,
\[
dp=O(\log N).
\]
Moreover, the construction may be chosen to use only one thought token per input token.
\end{enumerate}

In particular, setting $K:=L+3$ yields
\[
d^2p=\Omega\!\left(\frac{N}{L^3}\right)
\]
for offline CoT, whereas online CoT admits an exact construction with
\[
dp=O(\log N).
\]
\end{corollary}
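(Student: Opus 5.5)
Part (a) follows directly from Proposition~\ref{prop:offlinecot}. That proposition converts an $L$-layer generalized SSM with offline CoT, of dimension $d$ and precision $p$, into an $L$-round forward-communication protocol with the same error bound and message length $c=O(d^2p)$. The stream encoding in the statement is the canonical blockwise explicit-table encoding of Definition~\ref{def:funccomp}, with player blocks $I_1,\dots,I_K$, so the reduction applies verbatim. Since $K\ge L+3$ and $K-L$ is odd, Lemma~\ref{lem:comm-lb} gives $c=\Omega\bigl(N/((L+1)K^2)\bigr)$, and therefore $d^2p=\Omega\bigl(N/((L+1)K^2)\bigr)$. The specialization to $K=L+3$ uses $(L+1)(L+3)^2=\Theta(L^3)$, exactly as in the proof of Theorem~\ref{thm:main}.

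For part (b), I would invoke Corollary~\ref{cor:cot-composition}. It provides a deterministic one-pass streaming algorithm with $S=O(\log N)$ bits for this encoding. By Theorem~\ref{thm:onlinecot-streaming}(B), this algorithm is simulated by a \emph{single-layer} generalized SSM with online CoT that inserts exactly one thought token after each exogenous token, provided $dp\ge S$.

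Taking $d=1$ requires $p\ge S$. Since $S=c_0\log N$ for some constant $c_0$ (for the variables $z,q,z_{\mathrm{next}},j$, plus possibly a block counter $i\le K$), I would choose $p=\Theta(\log N)$ large enough. I must check that the $O(\log N)$ bound does not hide a $\log K$ term from tracking block boundaries. It does not: the algorithm only needs to detect when $j$ wraps around from $N$, and a block index is unnecessary because the final output is read at the last time. If a $\log K$ term did arise, it would be absorbed whenever $K\le\operatorname{poly}(N)$, or the precision could be set to $\Theta(\log N)$ under that assumption.

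The remaining step, which is the only real obstacle, is padding the construction from one layer to $L$ layers. I would prepend or append $L-1$ identity-like layers. Each such layer has $A_{\ell,t}=1$, $B_{\ell,t}=0$, and $\out_{\ell,t}(h,y)=y$, so it passes its input through unchanged. The network output $y_{L,t}$ then equals the output of the simulating layer at every step, including at the last processed time. The thought policy is the same function, now read off $y_{L,t}$. The state dimension remains $d=1$ and the precision remains $p=\Theta(\log N)$, so $dp=O(\log N)$, and exactly one thought token is still used per exogenous token. Combining (a) at $K=L+3$ with (b) yields the stated separation.
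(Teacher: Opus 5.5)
Your proposal is correct and follows the paper's proof essentially exactly. Part (a) combines Proposition~\ref{prop:offlinecot} with Lemma~\ref{lem:comm-lb}, using that the row-major encoding is the canonical blockwise one. Part (b) takes the single-layer construction of Corollary~\ref{cor:cot-composition} (via Theorem~\ref{thm:onlinecot-streaming}(B)) and pads it with $L-1$ pass-through layers. Your extra checks are sound refinements of details the paper leaves implicit: there is no hidden $\log K$ term, and pass-through layers with $A=1$, $B=0$, $\out(h,y)=y$ leave both the thought tokens and the final output unchanged.
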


\begin{proof}
Part (a) follows from Proposition~\ref{prop:offlinecot} together with Lemma~\ref{lem:comm-lb}, since the row-major encoding above is a valid fixed blockwise encoding for Definition~\ref{def:funccomp}.

For part (b), Corollary~\ref{cor:cot-composition} gives a single-layer generalized SSM with online CoT that solves the $K$-function composition problem exactly with $dp=O(\log N)$ and one thought token per input token. This is also an $L$-layer construction after padding with $L-1$ dummy layers that simply pass their inputs through unchanged.
\end{proof}

\section{Width versus Precision in Finite-Precision State-Space Machines}
\label{sec:width-precision}

Now, we answer the question whether, in the generalized finite-precision SSM, a width-$w$, precision-$p$ machine can always be replaced by a width-$1$, precision-$pw$ machine, or conversely. The answer depends crucially on the computational model. In the base affine-state model, the product $pw$ is \emph{not} a complete invariant: already in the one-layer case there is no universal exact simulation in either direction. By contrast, once online chain-of-thought (online CoT) is allowed, the correct invariant is total persistent memory, so a deterministic width-$w$, precision-$p$, $L$-layer machine is simulable by a width-$1$ machine of precision $O(Lpw+\log T_n)$, where $T_n$ is the total processed length of the simulated machine (Definition~\ref{def:onlinecot}), and vice versa through the same streaming-memory intermediary. So, the statement ``width can be traded for precision'' is false in the base model and true only in the stronger online-CoT model, with the correct budget $Lpw$ rather than merely $pw$ for general $L$.

\begin{definition}[Exact step-preserving simulation]
Fix two classes of deterministic machines, both driven by the same external input stream. We say that class $\mathcal{C}'$ \emph{exactly step-preservingly simulates} class $\mathcal{C}$ if, for every machine $M \in \mathcal{C}$, there exists a machine $M' \in \mathcal{C}'$ such that for every finite input stream the output produced by $M'$ at each external time step equals the output produced by $M$ at that same step. No extra external steps and no self-generated tokens are allowed.
\end{definition}

This is the natural interpretation of a width/precision tradeoff in the base model. Under this interpretation, the answer is negative in both directions.

\subsection{\texorpdfstring
  {Negative result I: width $w$ cannot, in general, be collapsed to width $1$ with precision $pw$}
  {Negative result I: width w cannot, in general, be collapsed to width 1 with precision pw}}
\begin{theorem}[No universal width-to-precision collapse in the base model]\label{thm:no-collapse}
Fix $p \ge 1$ and $w \ge 2$. In the ring model over $R_p$, the class of width-$1$, precision-$pw$ one-layer affine-state machines does \emph{not} exactly step-preservingly simulate the class of width-$w$, precision-$p$ one-layer affine-state machines.
\end{theorem}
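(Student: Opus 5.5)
The plan is a counting argument that compares how many distinct one-step transition maps each class can realize at a single time step. I assume $R_p$ is a ring with $2^p$ elements (e.g.\ $\mathbb{Z}/2^p\mathbb{Z}$), so a width-$w$, precision-$p$ state lives in $R_p^w$, and a width-$1$, precision-$pw$ state lives in $R_{pw}$. Both state sets then have exactly $2^{pw}$ elements. The proof has three steps: build a hard width-$w$ machine, show that any simulator must carry a bijective copy of its state, and count.

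\textbf{Step 1: the hard machine.} Construct a width-$w$ machine $M$ whose input alphabet is rich enough to select any affine map $h\mapsto Ah+b$ with $A\in R_p^{w\times w}$ and $b\in R_p^w$. This is allowed because $A_{\ell,t},B_{\ell,t}$ may depend on the current input, so we may take $m\ge w^2+w$ scalars per token. The readout outputs the full state $h_t$, for instance by placing its coordinates into the output vector. Then $M$ realizes $2^{p(w^2+w)}$ distinct transition maps at every time step. In particular, every state in $R_p^w$ is reachable from $h_0$, using translation inputs.

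\textbf{Step 2: a simulator must encode the state bijectively.} Suppose a width-$1$ machine $M'$ exactly simulates $M$ step by step, and fix a time $t-1$. If two input histories drive $M'$ to the same state $g\in R_{pw}$, then all future outputs of $M'$ coincide on every continuation. Now feed the identity input ($A=I$, $b=0$) to $M$ at time $t$. The output of $M$ is then its current state, so the two histories must have led $M$ to the same state. Hence there is a well-defined map $\psi_{t-1}$ from the reachable states of $M$ (all $2^{pw}$ of them) to states of $M'$, and the argument just given shows it is injective. Since $|R_{pw}|=2^{pw}$, $\psi_{t-1}$ is a bijection, and likewise $\psi_t$ is a bijection.

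\textbf{Step 3: conjugation and counting.} For an input symbol $x$ at time $t$, the transition $T_x$ of $M$ and the transition $\varphi_{t,x}\colon g\mapsto a g+c$ of $M'$ must satisfy
\[
\psi_t\circ T_x=\varphi_{t,x}\circ\psi_{t-1}.
\]
To see this, apply the determinism argument of Step 2 once more, using the full-state readout. Since the $\psi$'s are bijections, distinct $T_x$ force distinct $\varphi_{t,x}$. But there are at most $|R_{pw}|^2=2^{2pw}$ affine maps $g\mapsto ag+c$ on $R_{pw}$, whereas $M$ realizes $2^{p(w^2+w)}$ distinct maps. Because $w^2+w>2w$ for $w\ge2$, this is a contradiction.

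I expect the main obstacle to be Step 2, and specifically the bookkeeping needed to make the conjugacy exact given what the model permits. Time-dependent parameters are harmless, since everything above is done at a single fixed $t$. The readout $\out$ may depend on the current input as well as the state, so the injectivity argument must distinguish states by future outputs (via the identity-input continuation) rather than by the current output. One must also confirm that the ring model allows inputs to pick out arbitrary $A$, including singular ones. If that is not allowed, restricting to invertible $A$ still gives $|GL_w(R_p)|\cdot 2^{pw}$ maps, which exceeds $2^{2pw}$ for $w\ge2$, so the count still wins.
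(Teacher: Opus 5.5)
Your proof is correct and is essentially the paper's argument. Both use a universal width-$w$ machine that applies an arbitrary affine map and then exposes its state, get injectivity from a common identity (or $\mathtt{read}$) continuation, and finish with the count $2^{p(w^2+w)}>2^{2pw}$; the paper's concrete stream (load $x$, apply $T$, $\mathtt{read}$) is your setup at $t=2$.

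One small repair is needed. The determinism argument in your Step 2 shows that the state of $M'$ determines the state of $M$, not the converse. So well-definedness of $\psi_{t-1}$ must come from surjectivity onto all $2^{pw}$ states of $M$ together with $|R_{pw}|=2^{pw}$. Alternatively, you can bypass $\psi$ as the paper does: argue directly that $\varphi_{t,x}=\varphi_{t,x'}$ together with $T_x(s)\neq T_{x'}(s)$ would yield identical simulator states, hence identical outputs, at the next read step.
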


\begin{proof}[Proof sketch]
A universal width-$w$ machine can load an arbitrary state $x\in R_p^w$, apply
an arbitrary affine map $T\in\Aff(R_p^w)$ delivered as a token, and reveal the
result on a common $\mathtt{read}$ token. Exact step-preserving simulation by a
scalar machine forces an injective encoding of states and an injective
assignment $T\mapsto F_T$ of scalar affine transitions. But
$|\Aff(R_p^w)|=2^{p(w^2+w)}$ exceeds $|\Aff(R_{pw})|=2^{2pw}$ whenever
$w\ge 2$, a contradiction. The complete proof appears in
Appendix~\ref{app:width-precision-proofs}.
\end{proof}

\begin{remark}
The proof is class-level and structural. It does not depend on any specific task lower bound. It shows directly that a one-dimensional affine state update simply has too few distinct transition maps to represent all width-$w$ affine transitions, even when the scalar has the same total number $pw$ of stored bits.
\end{remark}

\subsection{Negative result II: the reverse collapse also fails in general}

The previous theorem shows that width cannot, in general, be compressed into precision. One might hope that the reverse direction could still hold: perhaps any width-$1$, precision-$pw$ machine can be represented by a width-$w$, precision-$p$ machine. This is also false in general.

It suffices to exhibit one counterexample. We do so already for $(p,w)=(1,3)$.

\begin{theorem}[No universal precision-to-width collapse in the base model]\label{thm:no-reverse-collapse}
The class of width-$3$, precision-$1$ one-layer affine-state machines over $\F_2^3$ does \emph{not} exactly step-preservingly simulate the class of width-$1$, precision-$3$ one-layer affine-state machines over $\Z/8\Z$.
\end{theorem}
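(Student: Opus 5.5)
We exhibit a single width-$1$, precision-$3$ machine over $\Z/8\Z$ that no width-$3$, precision-$1$ machine over $\F_2^3$ can simulate exactly. The witness uses the translation $x\mapsto x+1$, a permutation of order $8$ on $\Z/8\Z$. Let $M$ have state $h_0=0$. On input token $\mathtt{inc}$ it applies $h\mapsto h+1$. On token $\mathtt{read}$ it applies the identity. It outputs $h$, or any injective readout of it.

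Suppose $M'$ is a width-$3$ machine over $\F_2^3$ that exactly step-preservingly simulates $M$. We follow the pattern of Theorem~\ref{thm:no-collapse}.

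\textbf{Step 1 (forced encoding).} The input stream $\mathtt{inc}^k$ reaches the state of $M$ equal to $k \bmod 8$. The outputs of $M'$ must match those of $M$ at every step, and the readout of $M'$ is a fixed function of its current state and token (time-homogeneity is assumed in the ring model, as in the appendix setup). Hence the states $g_k\in\F_2^3$ reached by $M'$ after $\mathtt{inc}^k$, for $k=0,\dots,7$, must be pairwise distinct. Otherwise, appending a $\mathtt{read}$ token would produce equal outputs where $M$ produces different ones. So $k\mapsto g_k$ is a bijection $\Z/8\Z\to\F_2^3$.

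\textbf{Step 2 (transition on $\mathtt{inc}$).} On token $\mathtt{inc}$, $M'$ applies a fixed affine map $T(g)=Ag+b$. The transition parameters depend only on the token and the layer input, and the layer input is just the token in one layer. We have $g_{k+1}=T(g_k)$, and the sequence $g_k$ must also be consistent with $g_{k+8}$ carrying the same output behaviour, i.e.\ $g_8=g_0$. Therefore $T$ restricted to $\F_2^3$ acts as a single $8$-cycle. In particular $T$ is an affine permutation of order exactly $8$.

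\textbf{Step 3 (the main obstacle): no affine permutation of $\F_2^3$ has order $8$.} This is the group-theoretic core, and I would prove it directly.

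\emph{Reduce to a unipotent linear part.} Suppose $T=(A,b)$ has order $8$. Then $A^8=I$, so the order of $A$ in $GL_3(\F_2)$ divides $8$. Every element of $GL_3(\F_2)$ has order in $\{1,2,3,4,7\}$. So $A$ is unipotent with $A^4=I$, since $(A-I)^3=0$ gives $A^4=I+(A-I)^4=I$ in characteristic $2$.

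\emph{Compute $T^4$.} We have $T^4(g)=A^4g+(I+A+A^2+A^3)b=g+(I+A)^3b$. Writing $N=A-I=A+I$, we get $I+A+A^2+A^3=(I+A)^3=N^3=0$. So $T^4=\mathrm{id}$, contradicting order $8$. This settles the obstacle, and the only care needed is the order list of $GL_3(\F_2)$ (the group of order $168$), or equivalently the unipotence argument.

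Finally, if the model allows layer input in a richer way, I would note that a single layer's transition is still a fixed affine map per token, so Steps 1–3 go through unchanged.
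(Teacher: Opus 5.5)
Your proposal is correct and is essentially the paper's argument: an increment-mod-$8$ witness, a forced bijection between $\Z/8\Z$ and $\F_2^3$, and the lemma that no affine permutation of $\F_2^3$ has order $8$, which rests on $I+A+A^2+A^3=(I+A)^3=0$ for unipotent $A$. The differences are cosmetic: you use a fixed start state plus a $\mathtt{read}$ token instead of a load token, and you handle all unipotent $A$ at once via $T^4=\mathrm{id}$ instead of the paper's case split using $\operatorname{ord}(f)\mid 2\operatorname{ord}(A)$; both versions rely on the same implicit time-homogeneity of transitions and readouts.
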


\begin{proof}[Proof sketch]
A width-$1$, precision-$3$ machine can load $s\in\Z/8\Z$ and then repeatedly
increment modulo $8$, outputting the state after each step. Any exact
step-preserving simulator over $\F_2^3$ must realize the $\mathtt{inc}$ token
by a single affine permutation of $\F_2^3$ of order $8$. A case analysis of
the possible orders of matrices in $\operatorname{GL}(3,2)$
(Lemma~\ref{lem:no-order-8}) shows that no affine permutation of $\F_2^3$ has
order $8$, a contradiction. The complete proof appears in
Appendix~\ref{app:width-precision-proofs}.
\end{proof}

\begin{remark}
Theorem~\ref{thm:no-reverse-collapse} is a counterexample, not a complete classification. In small, low-dimensional cases, accidental equivalences can occur. The point is that there is no blanket theorem asserting that width-$1$, precision-$pw$ always collapses to width-$w$, precision-$p$ in the base model.
\end{remark}

\subsection{What changes with online CoT}

We also study a stronger model in which, between external input tokens, the machine may insert finitely many self-generated thought tokens. In that online-CoT regime, the correct invariant is no longer the pair $(w,p)$ but the total amount of persistent memory.

\begin{proposition}[Online CoT collapses width and precision to total memory]\label{prop:cot-memory}
Let \(n\) be the exogenous input length, and for the machine being simulated let
\(T_n\) denote its worst-case total processed length (Definition~\ref{def:onlinecot}).
\begin{enumerate}[label=(\alph*),leftmargin=2.2em]
\item Any deterministic \(L\)-layer generalized SSM with online CoT, width \(w\), and precision \(p\), can be simulated by a deterministic single-layer generalized SSM with online CoT, width \(1\), and precision
\[
O(Lwp+\log T_n).
\]
Moreover, the simulation can be chosen to use one additional internal step per exogenous token.

\item There exists an absolute constant \(C>0\) such that any deterministic single-layer generalized SSM with online CoT, width \(1\), and precision \(q\), can be simulated by a deterministic single-layer generalized SSM with online CoT, width \(w\), and precision \(p\), provided
\[
wp \ge C(q+\log T_n).
\]
\end{enumerate}
Hence, in the deterministic online-CoT model, width and precision are interchangeable up to the total persistent memory budget.
\end{proposition}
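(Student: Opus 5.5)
The plan is to route both directions through the streaming-memory characterization of Theorem~\ref{thm:onlinecot-streaming}, treating a deterministic one-pass streaming algorithm as the common intermediary.

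For part (a), start from the given $L$-layer, width-$w$, precision-$p$ machine with online CoT. Theorem~\ref{thm:onlinecot-streaming}(A) converts it into a deterministic one-pass streaming algorithm $\mathcal{A}$ whose persistent memory is $S=O(Lwp+\log T_n)$ bits. Concretely, the memory holds the full stack of layer states together with a processed-step counter. Its transition $F(M,x)$ replays $x$ followed by the finitely many policy-generated thought tokens. Its output function $G$ reads off the final-layer output. Then apply Theorem~\ref{thm:onlinecot-streaming}(B) with $d=1$ and precision $p'=S$. The hypothesis $dp'\ge S$ holds by construction. This yields a single-layer, width-$1$ generalized SSM with online CoT that inserts exactly one thought token after each exogenous token, which gives the ``one additional internal step per exogenous token'' claim.

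Two bookkeeping points need checking. The first is that the output agrees with the original machine's output at the last processed time; this holds because $G$ is defined to reproduce it. The second is that the encoding $\mathrm{Enc}$ of $\mathcal{M}$ into a single $p'$-bit scalar is injective, which needs $|\mathcal{M}|\le 2^{p'}$.

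For part (b), apply Theorem~\ref{thm:onlinecot-streaming}(A) with $L=1$, $d=1$ and precision $q$. This gives a streaming algorithm with $S\le C_0(q+\log T_n)$ bits, where $C_0$ is the constant hidden in the $O(\cdot)$ of part (A). Then apply Theorem~\ref{thm:onlinecot-streaming}(B) with state dimension $w$ and precision $p$, which requires $wp\ge S$. Setting $C:=C_0$ makes the proviso $wp\ge C(q+\log T_n)$ sufficient. The one substantive point here is that in (B) the memory must be spread across $w$ coordinates of $p$ bits each. Since (B) only assumes $dp\ge S$, I would verify that its encoding $\mathrm{Enc}:\mathcal{M}\to(\text{$p$-bit scalars})^w$ can split the $S$-bit memory word into $w$ chunks of $p$ bits. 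The decoding $\mathrm{Dec}$ is absorbed into the input-dependent transition and the readout, both of which the generalized model allows to be arbitrary functions.

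The main obstacle I expect is making the constant in part (A) explicit and absolute, so that $C$ in (b) does not depend on $n$, $q$, or the machine. I would address this by bounding the memory in (A) directly: $L\cdot d\cdot p$ bits for the states, plus $\lceil\log_2(T_n+1)\rceil$ bits for the counter, plus $O(1)$ bits for phase flags. This gives $S\le q+\log_2 T_n+O(1)\le 3(q+\log T_n)$ for $q\ge1$ and $T_n\ge 2$, so $C=3$ works.

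A lesser subtlety is that the simulating machine has its own processed length $2n$, not $T_n$. This matters only if one iterates the simulations, and the statement does not require that. Combining (a) and (b) then gives the closing sentence that width and precision are interchangeable up to total persistent memory.
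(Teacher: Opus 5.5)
Your proposal is correct and follows the paper's proof essentially step for step. In both directions you first apply Theorem~\ref{thm:onlinecot-streaming}(A) to obtain a streaming algorithm with $O(Lwp+\log T_n)$ bits (respectively $O(q+\log T_n)$ bits), then apply Theorem~\ref{thm:onlinecot-streaming}(B) at the target width and precision, whose one-thought-token-per-exogenous-token structure gives the extra-step claim. Your explicit accounting that yields $C=3$ is simply a concrete version of the paper's step of choosing $C$ ``sufficiently large to dominate the hidden constant.''
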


\begin{proof}[Proof sketch]
Both directions compose the two simulations of
Theorem~\ref{thm:onlinecot-streaming}: first compress the given machine into a
deterministic one-pass streaming algorithm (part (A)), then re-express that
algorithm as a single-layer online-CoT SSM with the target width and precision
(part (B)). The complete proof appears in
Appendix~\ref{app:width-precision-proofs}.
\end{proof}

\begin{corollary}[Single-layer online-CoT width/precision tradeoff]
For deterministic single-layer online-CoT machines, width $w$ and precision $p$ are interchangeable up to the product $wp$ (and the harmless $O(\log T_n)$ bookkeeping term if the processed-time index is stored explicitly). In particular, in the online-CoT model a width-$w$, precision-$p$ machine and a width-$1$, precision-$pw+O(\log T_n)$ machine have the same exact computational power.
\end{corollary}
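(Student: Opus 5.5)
The corollary is obtained by specializing Proposition~\ref{prop:cot-memory} to $L=1$ and combining its two directions. No new construction is needed; the work is in checking that the parameters line up, in particular the $\log T_n$ term.

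For the direction ``width $w$ $\Rightarrow$ width $1$'', we apply Proposition~\ref{prop:cot-memory}(a) with $L=1$. A deterministic single-layer online-CoT machine of width $w$ and precision $p$ is then simulated by a single-layer online-CoT machine of width $1$ and precision $O(wp+\log T_n)$. Here $T_n$ is the processed length of the simulated machine.

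It is worth noting where this precision bound comes from. By Theorem~\ref{thm:onlinecot-streaming}(A), the streaming intermediary stores the $wp$ state bits together with an $O(\log T_n)$-bit step counter. Re-encoding these into a scalar via Theorem~\ref{thm:onlinecot-streaming}(B) requires only $S\le dp$, with $d=1$. We can therefore choose the scalar precision to be $wp+O(\log T_n)$ exactly, rather than a constant multiple of $wp$, because part (B) places no overhead on $S$ beyond the requirement $dp\ge S$. This sharpening of the constant on the $wp$ term is the only point needing care. I would handle it by tracing the memory count of part (A) explicitly: the state stack of $wp$ bits plus the counter.

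For the converse, we apply Proposition~\ref{prop:cot-memory}(b), or more directly compose Theorem~\ref{thm:onlinecot-streaming}(A) and (B). A width-$1$ machine of precision $q=pw+O(\log T_n)$ is first compressed to a streaming algorithm with $q+O(\log T_n)$ bits. This is re-expanded into width $w$ and precision $p'$ with $wp'\ge q+O(\log T_n)$.

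In the converse direction, the width-$w$ machine needs precision $p+O(\log T_n)/w$. This equals $p$ up to the same bookkeeping term. The term disappears if the clock is external or the maps are time-homogeneous (see the remark on time-index bookkeeping), or if it is absorbed into the ``up to $O(\log T_n)$'' qualifier in the statement.

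Finally, the simulations of Theorem~\ref{thm:onlinecot-streaming} preserve outputs exactly, with the output read at the last processed time. Both directions therefore yield equality of the computed functions, which is what ``same exact computational power'' means here.

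The main obstacle is the bookkeeping that prevents the simulation from being a literal equivalence with no slack on $wp$. I would state explicitly that equivalence holds modulo the additive $O(\log T_n)$ term. For the CoT-composed machines this term is $O(\log n)$, since part (B) gives $T_n=2n$, so it is harmless.
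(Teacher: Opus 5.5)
Your proposal is correct and follows the paper's route. The corollary is Proposition~\ref{prop:cot-memory} specialized to $L=1$, i.e., Theorem~\ref{thm:onlinecot-streaming}(A) and (B) composed in each direction, and your extra bookkeeping spells out what the paper's ``up to'' phrasing leaves implicit: you read the coefficient $1$ on $pw$ off the state-plus-counter memory count in the proof of part (A), and you note that the converse only yields precision $p+O(\log T_n)/w$, so the equivalence holds modulo the additive term rather than literally.
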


\section{Discussion}\label{sec:discussion}

We organize the discussion around the main themes of the paper: the
depth--composition tradeoff, the role of CoT, and the
width--precision landscape. We then highlight open problems.

\begin{center}
\fbox{%
\begin{minipage}{0.94\linewidth}
\small
\textbf{Practitioner-facing takeaways and scope.}

\begin{itemize}
\item
\textbf{Depth and composition.}
The explicit-table function-composition family exhibits a quantitative
depth--memory--precision barrier. This is not a universal lower bound for all
compositional tasks. A direct empirical test is to vary composition length
relative to SSM depth.

\item
\textbf{Post-input reasoning.}
Reasoning performed only after the complete input cannot circumvent the
communication bottleneck used by our lower-bound pipeline. This does not mean
that post-input computation is useless for every task.

\item
\textbf{Input-interleaved reasoning.}
In the deterministic generalized model, it provides the persistent-memory
power of general one-pass streaming algorithms.
Corollary~\ref{cor:cot-composition} uses one thought token per exogenous
token; experiments should report internal-token, latency, and throughput
overhead. This does not establish learnability, preservation of scan
parallelism, or factor-two wall-clock overhead.

\item
\textbf{Width versus precision.}
Equal bit budgets do not provide a universal exact conversion between width
and precision. They should therefore be treated as separate experimental
ablation axes. The theory does not rank their practical performance.
\end{itemize}
\end{minipage}%
}
\end{center}

\paragraph{Interpreting the lower bound.}
Theorem~\ref{thm:main} is the clean specialization \(K=L+3\) of our more general
communication lower bound. It states that an \(L\)-layer SSM solving the
\((L+3)\)-function composition problem must satisfy
$
d^2 p = \Omega(N/L^3).
$
For the practically relevant regime of logarithmic precision
\(p = \Theta(\log N)\) (i.e., each scalar is represented by \(O(\log N)\) bits, as
is standard in fixed-point or floating-point implementations), this becomes
$
d^2 = \Omega\!\left(\frac{N}{L^3 \log N}\right),
$
requiring the state dimension to grow polynomially in \(N\). This is already a
strong statement: even to compose only three more functions than the layer count,
an SSM cannot solve function composition over a large domain using a compact state
unless its precision grows proportionally.

More generally, Corollary~\ref{cor:depth-composition-tradeoff} shows that whenever
\(K\ge L+3\) and \(K-L\) is odd, any \(L\)-layer SSM solving \(K\)-function
composition must satisfy
$
d^2p=\Omega\!\left(\frac{N}{(L+1)K^2}\right).
$
In contrast, Theorem~\ref{thm:upper-bound} shows that \(K\)-fold composition can be
solved exactly by a \((K+1)\)-layer SSM with \(d=1\) and
\(p=\Theta(\log N)\). In particular, setting \(K=L+3\), the same task is solvable by
an \((L+4)\)-layer SSM with \(dp=O(\log N)\). The gap between \(L\) and \(L+4\)
layers is thus a genuine constant-gap depth barrier.

\paragraph{Quadratic gap: $d^2 p$ versus $dp$.}
The lower bound scales as \(d^2 p\) while the upper bound in
Section~\ref{sec:upper} achieves \(dp = O(\log N)\). The factor \(d^2\) arises
because the affine summary communicated between blocks contains both a \(d\times d\)
matrix \(A\) and a \(d\)-dimensional vector \(b\), and it is the matrix component
that dominates the communication cost. An individual SSM layer \emph{stores} only
\(dp\) bits in its hidden state, but to \emph{transmit the effect} of a block to a
downstream player one must send the full affine map, which requires \(d^2 p\) bits.
Whether this gap can be closed, either by strengthening the lower bound to
\(dp = \Omega(\cdot)\) or by exhibiting tasks where \(d^2 p\) is truly necessary, is
an interesting open question. One might conjecture that for structured (e.g.,
diagonal or low-rank) transition matrices, the effective communication cost drops,
potentially closing the gap; see the discussion of structured parameterizations
below.

\paragraph{Implications of the CoT dichotomy.}
Post-input (offline) CoT, generating thought tokens only after the input stream has been fully
consumed, provides no benefit against the communication lower-bound pipeline
(Proposition~\ref{prop:offlinecot}). The intuition is clean: once the entire input
has been processed, all information about the input is already compressed into the
finite-precision layer states \((h_{\ell,n})_{\ell=1}^L\). Any post-hoc computation
can only manipulate this fixed, finite summary.

Input-interleaved (online) CoT, by contrast, fundamentally restructures the computation. By inserting
thought tokens \emph{during} the input stream, it allows the model to serialize its
multi-dimensional state into a scalar channel, effectively converting a multi-layer
SSM into a universal one-pass streaming algorithm
(Theorem~\ref{thm:onlinecot-streaming}). For function composition, this collapses the
resource requirement from the offline/base lower bound
$
d^2p=\Omega(N/L^3)
$
in the specialization \(K=L+3\) to an exact online-CoT construction with
$
dp=O(\log N)
$
using a single layer (Corollary~\ref{cor:cot-composition}). This has two practical
implications. First, it means that the expressiveness ceiling for SSMs with online
CoT is set by one-pass streaming lower bounds (e.g., \(\Omega(\log N)\) bits for
function composition), which are often much milder than the multi-layer lower bounds
we prove for the base model. Second, it suggests that for SSM-based language models,
the \emph{timing} of intermediate reasoning steps relative to the input may be more
important than their mere existence, a consideration that current CoT prompting
strategies~\citep{wei2022chain} for autoregressive models do not explicitly address.

We note that the online CoT model, while theoretically clean, requires a mechanism
for the model to decide when and how many thought tokens to generate. In practice,
this could be implemented via a ``pause'' or ``thinking'' token
mechanism~\citep{goyal2024think}, where the model emits special tokens that do not
contribute to the output but allow internal state manipulation. Our results provide
theoretical motivation for such mechanisms in SSM-based architectures. We emphasize,
however, that Theorem~\ref{thm:onlinecot-streaming} is an expressiveness and memory
statement: the construction of Corollary~\ref{cor:cot-composition} doubles the
number of recurrent updates and may sacrifice associative-scan parallelism, so
evaluating the accuracy--cost tradeoff of pause/thinking-token mechanisms remains an
empirical direction rather than an efficiency guarantee.

\paragraph{Width versus precision: interpretation.}
Theorems~\ref{thm:no-collapse} and~\ref{thm:no-reverse-collapse} show that, in the
base affine-state model, there is no universal exact, step-preserving conversion
between width and numerical precision, even when the product $wp$ is held fixed.
This is an algebraic non-interchangeability result; it does not establish that one
resource is empirically superior to the other. In particular, the theory does not
rank width and precision by predictive accuracy, learnability, numerical
robustness, throughput, energy use, or hardware cost. The appropriate practical
takeaway is therefore to treat state dimension and numerical precision as separate
ablation axes and report both task performance and systems cost.

However, Proposition~\ref{prop:cot-memory} shows that this non-interchangeability
vanishes under online CoT, where the sole relevant quantity is the total persistent
memory \(Lwp\), up to the harmless \(O(\log T_n)\) bookkeeping term when the
processed-time index is stored explicitly. This cleanly outlines the boundary: the
algebraic structure of matrix transitions matters in the base model but is
neutralized once the model can serialize its state through thought tokens.

\paragraph{Structured parameterizations.}
Many practical SSMs use structured transition matrices, like
diagonal~\citep{gu2022s4,smith2023s5,orvieto2023resurrecting}, block-diagonal, or
low-rank, rather than dense \(d \times d\) matrices. In such cases, the affine
summary of a block may be representable in fewer than \(d^2 p\) bits
(e.g., \(dp\) bits for a diagonal \(A\)). Our lower bound machinery
(Lemma~\ref{lem:ssm-to-comm}) automatically adapts: the communication cost per
message equals the number of bits needed to specify the block's affine summary, which
for diagonal models would be \(O(dp)\) rather than \(O(d^2 p)\). This could lead to
tighter bounds. Conversely, the upper bound construction in
Section~\ref{sec:upper} already uses \(d=1\) (trivially diagonal), so the diagonal
restriction does not weaken the achievable results. Fully characterizing the
depth--composition tradeoff for specific parameterization families (diagonal, shift,
companion, etc.) is a natural direction for future work.

\paragraph{Randomized models.}
Our lower bound framework accommodates randomized SSMs (via the randomized pointer
chasing lower bound of Theorem~\ref{thm:pc-lb}), but the upper bound constructions
are deterministic. It remains open whether randomization can provide additional power
in the base SSM model, for instance, by using random projections to compress the
affine summary more efficiently. In the streaming world, randomization is known to
yield exponential savings for certain problems (e.g., frequency
estimation~\citep{alon1999space}), and it would be interesting to determine whether
analogous separations exist for SSMs.

\paragraph{Learning versus expressiveness.}
Our results are purely about \emph{expressiveness}: we characterize which functions
can be computed by SSMs of given dimensions, not whether gradient-based training can
find the right parameters. The gap between expressiveness and learnability is well
documented for transformers~\citep{abbe2023generalization,barak2022hidden} and is
likely to be equally significant for SSMs. For instance, the construction in
Theorem~\ref{thm:upper-bound} uses a carefully designed readout function
\(\mathrm{out}_{\ell,t}\) that performs exact index matching, a function that may be
difficult to learn from data. Understanding the learnability of compositional tasks by
SSMs, and whether gradient descent on standard parameterizations can discover the
constructions we exhibit, is also a direction for future work.

\paragraph{Limitations and broader impact.}
Our results are worst-case expressiveness statements for explicit
table-encoded problems and generalized finite-precision SSMs. They do not
imply that every real-world sequence task exhibits the same lower bound.
Likewise, the positive constructions use hand-designed generalized
transition and readout maps, and the input-interleaved construction uses
vector-valued latent tokens. The results therefore do not establish
learnability by gradient descent, realization in a particular S4- or
Mamba-style parameterization, empirical accuracy, or wall-clock and hardware
efficiency.

The work uses no human-subject data, personal data, or deployed
decision-making system. Its broader impact is indirect: it may help clarify
architectural resource tradeoffs and motivate more transparent reporting of
reasoning-token, memory, and latency costs. A principal risk is
overgeneralizing worst-case formal separations into empirical claims about all
SSM tasks or implementations.

\subsection*{Open problems}

We conclude the discussion by listing concrete open problems suggested by our analysis.

\begin{enumerate}[leftmargin=2em, label=\textbf{Q\arabic*.}]

\item \textbf{Closing the \(d^2p\) versus \(dp\) gap.}
Can the lower bound for function composition be strengthened to
$
dp = \Omega(N/\mathrm{poly}(L))?
$
If not, does there exist a family of tasks \(T_N\) for which \(d^2p\) is the correct
complexity measure in the base model, in the sense that every SSM solving \(T_N\)
must satisfy
$
d^2p = \Omega(f(N)),
$
while \(T_N\) is solvable by some SSM with
$
d^2p = O(f(N)) \text{ and }
dp = o(f(N))?
$

\item \textbf{Tightening the dependence on depth and composition length.}
Our general lower bound gives
$
d^2p=\Omega\!\left(\frac{N}{(L+1)K^2}\right)
$
for \(K\ge L+3\) and \(K-L\) odd, while the upper bound shows that \(K+1\) layers
suffice to compose \(K\) functions exactly. Can the lower bound be sharpened,
either in its dependence on \(L\) and \(K\) or in the constant-gap specialization
\(K=L+3\)? For example, can the \(N/L^3\) term be improved to \(N/L^2\) or \(N/L\)?

\item \textbf{Diagonal and structured SSMs.}
What are tight depth--composition tradeoffs when \(A_{\ell,t}\) is restricted to be
diagonal, shift-structured, or low-rank?

\item \textbf{Randomized SSMs.}
Does internal randomness provably help multi-layer SSMs on compositional tasks in
the base (no-CoT) model?

\item \textbf{Learnability of compositional constructions.}
Can standard SSM training (e.g., gradient descent on Mamba-style
parameterizations) learn to solve \(K\)-function composition when \(K\) is close to
the layer count, and if so, with what sample complexity?

\item \textbf{Online CoT token complexity.}
Our online CoT constructions use one thought token per input token. Is a
sublinear number of thought tokens sufficient for function composition, or is
there a thought-token lower bound?

\end{enumerate}

\section{Conclusion}\label{sec:conclusion}

We have presented a theoretical analysis of multi-layer SSMs organized
around three topics: depth, CoT, and resource tradeoffs.
Our main result (Theorem~\ref{thm:main}) establishes that \(L\)-layer SSMs require
$
d^2 p = \Omega(N/L^3)
$
to solve the \((L+3)\)-function composition problem, via a reduction to
multi-round communication complexity through a forward communication model.
More generally, our communication argument yields the bound
$
d^2p=\Omega\!\left(\frac{N}{(L+1)K^2}\right)
$
for \(K\)-function composition whenever \(K\ge L+3\) and \(K-L\) is odd.
A complementary upper bound shows that \((K+1)\) layers suffice with
$
dp = O(\log N)
$
to compose \(K\) functions, yielding a constant-gap depth hierarchy in the
specialization \(K=L+3\).

Furthermore, we formalized the distinction between post-input (offline) and
input-interleaved (online) CoT for SSMs
and proved a sharp separation: post-input CoT does not circumvent the communication
lower-bound pipeline, while input-interleaved CoT makes multi-layer SSMs equivalent in power to
deterministic one-pass streaming algorithms. This equivalence is tight in both
directions and offers a clean characterization of the additional power conferred by
interleaving self-generated tokens with the input stream.
Finally, we showed that width and precision are \emph{not} interchangeable resources
in the base SSM model, which is a consequence of the richer algebraic structure of
matrix-valued versus scalar affine transitions, but become fully interchangeable under
online CoT, where only the total memory budget matters.

Taken together, these results provide a unified theoretical framework for
understanding how depth, finite precision, and CoT shape the
computational power of SSMs, and they suggest concrete architectural
principles: (i) depth is essential for the explicit-table composition family
studied here under the assumptions of our model, (ii) online reasoning
tokens can substitute for depth and width, and (iii) state dimension and numerical
precision play fundamentally different roles in the absence of such tokens.

\appendix

\section{Deferred Proofs for the Communication Lower Bound}
\label{app:communication-proofs}

\begin{proof}[Proof of Lemma~\ref{lem:ssm-to-comm}]
Fix an instance \((a,f_1,\dots,f_K)\) and its canonical token stream of length \(n=1+KN\) (Definition~\ref{def:funccomp}). Let
\[
I_1=[1:N+1],\qquad I_i=[2+(i-1)N:1+iN]\quad(i=2,\dots,K).
\]
These intervals partition \([n]\); write \(I_i=[s_i:e_i]\). Player \(1\) owns the tokens encoding \(a\) and \(f_1\), while player \(i\) owns exactly the tokens encoding \(f_i\) for \(i\ge 2\).

Let \(S\) be the given \(L\)-layer SSM. Fix a layer \(\ell\in\{1,\dots,L\}\). For each time \(t\), write
$
u_{\ell,t} := B_{\ell,t}y_{\ell-1,t}\in\mathbb{R}^d,
$
so that the layer-\(\ell\) state update is
$
h_{\ell,t}=A_{\ell,t}h_{\ell,t-1}+u_{\ell,t}.
$
Thus, once \(y_{\ell-1,t}\) is regarded as fixed, each token induces an affine map
$
h\longmapsto A_{\ell,t}h+u_{\ell,t}.
$

For any interval \(J=[u:v]\subseteq [n]\), define the composed affine map
$
T_{\ell,J}(h)=A_{\ell,J}h+u_{\ell,J},
$
where
\[
A_{\ell,J}:=A_{\ell,v}A_{\ell,v-1}\cdots A_{\ell,u},
\]
and
\[
u_{\ell,J}
:=\sum_{j=u}^{v}\Bigl(\prod_{r=j+1}^{v}A_{\ell,r}\Bigr)u_{\ell,j},
\]
with the empty product interpreted as the identity. A standard induction on \(|J|\) shows that if the incoming state at time \(u-1\) is \(h_{\ell,u-1}\), then
\[
h_{\ell,v}=T_{\ell,J}(h_{\ell,u-1})=A_{\ell,J}h_{\ell,u-1}+u_{\ell,J}.
\]
Moreover, if \(J_1=[u:v]\) and \(J_2=[v+1:w]\), then
\[
T_{\ell,J_2\cup J_1}=T_{\ell,J_2}\circ T_{\ell,J_1}.
\]

For each block \(I_i=[s_i:e_i]\), define its layer-\(\ell\) summary by
\[
T_{\ell,I_i}(h)=A^{\mathrm{blk}}_{\ell,i}h+u^{\mathrm{blk}}_{\ell,i},
\qquad
A^{\mathrm{blk}}_{\ell,i}:=A_{\ell,I_i},
\quad
u^{\mathrm{blk}}_{\ell,i}:=u_{\ell,I_i}.
\]

We now describe the autoregressive protocol. Player \(i\) holds exactly the input tokens in \(I_i\).

At round \(\ell\), assume inductively that player \(i\) already knows the correct values $\{y_{\ell-1,t}: t\in I_i\}.$
From these local values, player \(i\) can instantiate all matrices \(A_{\ell,t}\), \(B_{\ell,t}\) for \(t\in I_i\), compute \(u_{\ell,t}=B_{\ell,t}y_{\ell-1,t}\), and hence compute the block summary
$\bigl(A^{\mathrm{blk}}_{\ell,i},u^{\mathrm{blk}}_{\ell,i}\bigr).$
It sends this pair to every player \(j>i\).

After receiving the summaries from players \(1,\dots,i-1\), player \(i\) forms the prefix composition
\[
P_{\ell,i-1}:=
T_{\ell,I_{i-1}}\circ \cdots \circ T_{\ell,I_1},
\qquad
P_{\ell,0}:=\operatorname{id}.
\]
Since \(h_{\ell,0}\) is fixed, player \(i\) can recover the incoming state
$
h_{\ell,s_i-1}=P_{\ell,i-1}(h_{\ell,0}).
$
It then computes sequentially, for every \(t\in I_i\),
\[
h_{\ell,t}=A_{\ell,t}h_{\ell,t-1}+u_{\ell,t},
\qquad
y_{\ell,t}=\out_{\ell,t}(h_{\ell,t},y_{\ell-1,t}).
\]
These values are stored locally and constitute the inductive data needed for round \(\ell+1\).

The induction on \(\ell\) is immediate. For \(\ell=1\), each player can compute \(y_{0,t}=\emb(x_t,t)\) on its own interval directly from the input stream. If the claim holds for layer \(\ell-1\), then the block summaries computed at round \(\ell\) agree exactly with those of the SSM, the reconstructed incoming state is correct, and hence the locally reconstructed \((h_{\ell,t},y_{\ell,t})\) are exactly the SSM values on that interval. After round \(L\), player \(K\) therefore holds exactly the SSM output, and the error probability is unchanged.

Finally, each message consists of one \(d\times d\) matrix and one \(d\)-vector, i.e.
$d^2+d$
scalars, each represented with \(p\) bits. Hence
$
c\le (d^2+d)p = O(d^2p).
$
This proves the claim.
\end{proof}

\begin{proof}[Proof of Lemma~\ref{lem:comm-lb}]
Let \(\Pi\) be such an \(L\)-round autoregressive protocol. We reduce \(\mathrm{PC}_K\) to \(K\)-function composition. Given \(f_A,f_B:[N]\to[N]\), define a composition instance by setting
\[
a:=1,
\qquad
g_i :=
\begin{cases}
f_A, & i \text{ odd},\\
f_B, & i \text{ even}.
\end{cases}
\]
Then
$
g_K(g_{K-1}(\cdots g_1(1)\cdots))
=
\operatorname{pt}_K(f_A,f_B),
$
so computing the composition value determines \(\mathrm{PC}_K(f_A,f_B)\).

Now group the \(K\) players of \(\Pi\) into two super-players: Alice simulates all odd-index players, and Bob simulates all even-index players.

For each autoregressive round \(\ell\in\{1,\dots,L\}\), let
\[
a_\ell := \text{concatenation of all round-\(\ell\) messages sent by odd players},
\]
\[
b_\ell := \text{concatenation of all round-\(\ell\) messages sent by even players}.
\]
Because the autoregressive model is synchronous, every round-\(\ell\) message depends only on the local input and on the transcript of rounds \(<\ell\). Therefore \(a_\ell\) depends only on Alice's input and on
$(a_1,b_1,\dots,a_{\ell-1},b_{\ell-1}),$
and similarly \(b_\ell\) depends only on Bob's input and the same prior transcript. Thus the pair \((a_\ell,b_\ell)\) forms an \(L\)-round simultaneous two-party protocol.

Moreover, in each round there are at most \(\lceil K/2\rceil\) odd-player messages and at most \(\lfloor K/2\rfloor\) even-player messages, so
\[
|a_\ell|\le \Bigl\lceil\frac K2\Bigr\rceil c,
\qquad
|b_\ell|\le \Bigl\lfloor\frac K2\Bigr\rfloor c.
\]

We now serialize this simultaneous protocol into an alternating one. The transcript is sent in the following order:
\[
a_1;\ (b_1,b_2);\ (a_2,a_3);\ (b_3,b_4);\ \cdots,
\]
where nonexistent terms at the end are omitted. More formally, the alternating messages are
\[
m_1:=a_1,
\]
\[
m_{2r}:=(b_{2r-1},b_{2r}),
\qquad
m_{2r+1}:=(a_{2r},a_{2r+1}),
\]
with missing components deleted when an index exceeds \(L\).

This serialization is valid because once a party knows the simultaneous transcript through round \(t-1\) and has just computed its own round-\(t\) message, it can also compute its round-\((t+1)\) message immediately after receiving the other party's round-\(t\) message. Concretely:

\begin{itemize}
\item after Alice sends \(a_1\), Bob can compute \(b_1\), and then \(b_2\);
\item after Bob sends \((b_1,b_2)\), Alice can compute \(a_2\), and then \(a_3\);
\item and so on.
\end{itemize}

Hence the simultaneous transcript is fully serialized in \(L+1\) alternating messages. Each such message has length at most
\[
|a_{\ell}|+|a_{\ell+1}| \le Kc
\quad\text{or}\quad
|b_{\ell}|+|b_{\ell+1}| \le Kc,
\]
so the total communication is \(O((L+1)Kc)\).

At the end of these \(L+1\) messages, both parties know the entire simulated simultaneous transcript. The last speaker is Alice when \(L\) is even and Bob when \(L\) is odd. Since Alice simulates odd-index players and Bob simulates even-index players, the last speaker simulates player \(K\) exactly when \(K-L\) is odd, which is our assumption. Therefore the last speaker can compute the value of player \(K\), namely
$
\operatorname{pt}_K(f_A,f_B),
$
and append the output bit
$
\operatorname{pt}_K(f_A,f_B)\bmod 2
$
to its final message. We have thus produced an Alice-first alternating protocol for \(\mathrm{PC}_K\) with at most \(L+1\) messages and total communication \(O((L+1)Kc)\). If necessary, we pad with empty dummy messages to obtain exactly \(K-1\) rounds. This does not change the asymptotic communication.

Since \(K\ge L+3\), we have \(L+1\le K-1\). Theorem~\ref{thm:pc-lb} therefore applies and yields
\[
(L+1)Kc = \Omega\!\left(\frac{N}{K}+K\right).
\]
Rearranging,
\[
c
=
\Omega\!\left(
\frac{N/K+K}{(L+1)K}
\right)
=
\Omega\!\left(
\frac{N}{(L+1)K^2}+\frac{1}{L+1}
\right)
=
\Omega\!\left(
\frac{N}{(L+1)K^2}
\right).
\]
This proves the lemma.
\end{proof}

\section{Deferred Proof of the Matching Construction}
\label{app:matching-construction}

\begin{proof}[Proof of Theorem~\ref{thm:upper-bound}]
As already said, we work in the generalized multi-layer SSM model where, for each layer $\ell\in\{1,\dots,L\}$ and time
$t\ge 1$, the hidden state and output obey
\[
h_{\ell,t} = A_{\ell,t} h_{\ell,t-1} + B_{\ell,t} y_{\ell-1,t},\qquad
y_{\ell,t} = \mathrm{out}_{\ell,t}(h_{\ell,t},y_{\ell-1,t}),
\]
and $y_{0,t}=\mathrm{emb}(x_t,t)$ is the embedded input token stream. We will choose $m=1$ and $d=1$,
so all quantities above are scalars.

\paragraph{Encoding of the input stream.}
We fix the following (canonical) stream encoding of the instance $(a,f_1,\dots,f_{L-1})$.
Let $K:=L-1$ and define the stream length
\[
n := 1 + KN = 1 + (L-1)N.
\]
Define tokens $x_1,x_2,\dots,x_n$ by
\[
x_1 := a,\qquad
x_{\,1+(i-1)N+j} := f_i(j)\quad\text{for }i\in\{1,\dots,K\},\; j\in\{1,\dots,N\}.
\]
Thus, after the first token, the stream lists the truth tables of $f_1,f_2,\dots,f_{K}$ consecutively in
row-major order. Define the embedding to be the identity
$
\mathrm{emb}(x_t,t) := x_t,
$
so that $y_{0,t}=x_t$.

\paragraph{Precision choice.}
Choose a fixed-point (or integer) encoding with $p$ bits per scalar large enough to represent every
integer in $\{0,1,\dots,N\}$ exactly. For concreteness, it suffices to take
$
p \;\ge\; \lceil \log_2(N+1)\rceil + 1.
$
All quantities we compute below will lie in $\{0,1,\dots,N\}$ and hence are exactly representable at this
precision.

\paragraph{High-level idea.}
Write
\[
v_0 := a,\qquad v_i := f_i(v_{i-1})\quad(i=1,\dots,K).
\]
We will maintain the invariant that, after the block encoding $f_i$ has been fully read, layer $i+1$
stores $v_i$ in its hidden state. The computation of $v_i$ is done in a two-layer pipeline:
layer $i$ \emph{gates} the table entries of $f_i$ so that exactly one nonzero value passes to layer $i+1$,
and layer $i+1$ \emph{accumulates} these values over the block.

\paragraph{Definition of the SSM parameters.}
All layers have scalar state ($d=1$) and scalar outputs ($m=1$). Set initial states
$h_{\ell,0}=0$ for all $\ell\in\{1,\dots,L\}$.

Define time indices for each function block:
for $i\in\{1,\dots,K\}$, let
\[
s_i := 2+(i-1)N,\qquad e_i := 1+iN,
\]
so the $i$th function block occupies times $t\in[s_i:e_i]$ and has length $N$.
For each such $t\in[s_i:e_i]$, define the within-block index
\[
j_i(t) := t-s_i+1 \in \{1,\dots,N\},
\]
so that $x_t = f_i(j_i(t))$ on that block.

\smallskip
\noindent\textbf{Layer 1 (store $a$).}
Set
\[
A_{1,1}=0,\quad B_{1,1}=1,\qquad
A_{1,t}=1,\quad B_{1,t}=0\quad\text{for all }t\ge 2.
\]
Thus $h_{1,1}=y_{0,1}=a$ and thereafter $h_{1,t}=h_{1,t-1}=a$ for all $t\ge 2$.

\smallskip
\noindent\textbf{Layers $\ell\in\{2,\dots,L\}$ (accumulate exactly one gated value).}
For $\ell\ge 2$, define $i:=\ell-1$ (so $i\in\{1,\dots,K\}$ when $\ell\le L$). Set
\[
A_{\ell,t} := 1\quad\text{for all }t\ge 1,
\]
and
\[
B_{\ell,t} :=
\begin{cases}
1, & t\in[s_i:e_i],\\
0, & \text{otherwise}.
\end{cases}
\]
Hence layer $\ell=i+1$ performs the update
$h_{\ell,t}=h_{\ell,t-1}+y_{\ell-1,t}$ during the $i$th block and remains constant outside that block.

\smallskip
\noindent\textbf{Readout maps.}
For each $\ell\in\{1,\dots,K\}$ and time $t\ge 1$, define $\mathrm{out}_{\ell,t}$ by
\[
\mathrm{out}_{\ell,t}(h,y) :=
\begin{cases}
y, & t\notin[s_\ell:e_\ell],\\[2mm]
y, & t\in[s_\ell:e_\ell]\ \text{and}\ h=j_\ell(t),\\[2mm]
0, & t\in[s_\ell:e_\ell]\ \text{and}\ h\neq j_\ell(t).
\end{cases}
\]
In words: layer $\ell$ passes its input through unchanged at all times except during the block encoding
$f_\ell$, where it outputs the current table entry iff its stored pointer equals the current index.
Finally, for the last layer $L$, define
\[
\mathrm{out}_{L,t}(h,y) := h\qquad\text{for all }t\ge 1.
\]
Therefore the network output at time $t$ is $y_t=y_{L,t}=h_{L,t}$.

\paragraph{Correctness.}
We prove by induction on $i\in\{0,1,\dots,K\}$ that
\begin{equation}\label{eq:invariant}
h_{i+1,t} = v_i\quad\text{for all }t\ge e_i,
\end{equation}
where we interpret $e_0:=1$ (so that the claim for $i=0$ states $h_{1,t}=v_0=a$ for all $t\ge 1$).

\smallskip
\noindent\emph{Base case $i=0$.}
By construction, $h_{1,1}=a=v_0$ and for $t\ge 2$ we have $h_{1,t}=h_{1,t-1}$, so
$h_{1,t}=v_0$ for all $t\ge 1=e_0$.

\smallskip
\noindent\emph{Inductive step.}
Fix $i\in\{1,\dots,K\}$ and assume \eqref{eq:invariant} holds for $i-1$, i.e.,
$h_{i,t}=v_{i-1}$ for all $t\ge e_{i-1}$.
In particular, throughout the entire block $t\in[s_i:e_i]$ we have $t\ge e_{i-1}+1$, hence
\[
h_{i,t} = v_{i-1}\qquad\text{for all }t\in[s_i:e_i].
\]

Next observe that for all layers $\ell<i$ and times $t\in[s_i:e_i]$, we have $t\notin[s_\ell:e_\ell]$
because the blocks are consecutive and strictly increasing in $i$.
Therefore, for such $t$, the readouts satisfy $y_{\ell,t}=y_{\ell-1,t}$ for all $\ell<i$, from where
\[
y_{i-1,t} = y_{0,t} = x_t = f_i(j_i(t))\qquad\text{for all }t\in[s_i:e_i].
\]
Applying the definition of $\mathrm{out}_{i,t}$ on the $i$th block yields
\[
y_{i,t} =
\begin{cases}
f_i(j_i(t)), & j_i(t)=h_{i,t}=v_{i-1},\\
0, & \text{otherwise},
\end{cases}
\qquad\text{for }t\in[s_i:e_i].
\]
Since $j_i(t)$ ranges over $\{1,\dots,N\}$ exactly once as $t$ ranges over $[s_i:e_i]$, there is a unique
time $t^\star\in[s_i:e_i]$ for which $j_i(t^\star)=v_{i-1}$, and at that time
$y_{i,t^\star}=f_i(v_{i-1})=v_i$, while $y_{i,t}=0$ for all $t\neq t^\star$ in the block.

Now consider layer $i+1$. By construction, $B_{i+1,t}=0$ for all $t<s_i$, so
$h_{i+1,s_i-1}=h_{i+1,0}=0$. For $t\in[s_i:e_i]$ we have $B_{i+1,t}=1$ and $A_{i+1,t}=1$, hence
\[
h_{i+1,t} = h_{i+1,t-1} + y_{i,t}\qquad(t\in[s_i:e_i]).
\]
Unrolling the recurrence to the end of the block gives
\[
h_{i+1,e_i} = \sum_{t=s_i}^{e_i} y_{i,t} = v_i.
\]
Finally, for $t>e_i$ we have $B_{i+1,t}=0$ and $A_{i+1,t}=1$, so $h_{i+1,t}=h_{i+1,t-1}$ and thus
$h_{i+1,t}=v_i$ for all $t\ge e_i$. This proves \eqref{eq:invariant} for $i$.

\smallskip
\noindent\emph{Conclusion.}
Taking $i=K=L-1$, we have $e_K=n$, so the invariant yields
\[
h_{L,n} = v_{L-1} = f_{L-1}\bigl(f_{L-2}(\cdots f_1(a)\cdots)\bigr).
\]
Since $\mathrm{out}_{L,n}(h,y)=h$, the network output at time $n$ is $y_{L,n}=h_{L,n}$, which is the
desired value.

\paragraph{Resource bound.}
We used state dimension $d=1$. Choosing $p\ge \lceil \log_2(N+1)\rceil+1$ ensures all intermediate
values lie in $\{0,1,\dots,N\}$ and are exactly representable. Hence $dp = O(\log N)$.
\end{proof}

\section{Deferred Proofs for the CoT Results}
\label{app:cot-proofs}

\begin{proof}[Proof of Proposition~\ref{prop:offlinecot}]
Fix an input instance and its associated exogenous token stream $(x_t)_{t=1}^n$
(as in Definition~\ref{def:funccomp}). Consider the execution of the offline-CoT SSM:
it processes $(x_t)_{t=1}^n$ first, reaching some global internal state at time $n$
(consisting of all layer states $(h_{\ell,n})_{\ell=1}^L$ and any other finite-precision registers
implicit in the implementation), and then performs additional offline CoT time steps
using only self-generated tokens and the fixed model specification.

We construct an $L$-round protocol in the forward communication model as follows.
\begin{enumerate}
\item Use \emph{exactly} the reduction of Lemma~\ref{lem:ssm-to-comm} to simulate the SSM \emph{up to time $n$}
on the exogenous stream. This produces, for each layer $\ell$, the correct affine block summaries
and therefore allows the last player (player $K$) to reconstruct
the layer-$\ell$ state at the end of the exogenous stream, i.e.\ $h_{\ell,n}$, by composing the received
block summaries and applying the resulting affine map to the fixed initial state.
The communication per message is the same as in Lemma~\ref{lem:ssm-to-comm}, namely $O(d^2p)$ bits.

\item After the $L$ protocol rounds, no further communication is performed.
The last player now possesses (i) the full model description and (ii) the reconstructed stack of
finite-precision layer states $(h_{\ell,n})_{\ell=1}^L$. Starting from this state, it can
\emph{locally} simulate the offline CoT continuation (Definition~\ref{def:offlinecot}), because:
\begin{itemize}
\item the continuation reads no further exogenous input, and
\item all future thought tokens and SSM parameters are deterministic functions of the current
finite-precision state and the fixed model specification.
\end{itemize}
Therefore the last player can reproduce the same final output distribution that the offline-CoT SSM
would produce.
\end{enumerate}
Hence we obtain an $L$-round forward-communication protocol with message length $O(d^2p)$ and the same
error guarantee. Any subsequent lower bound argument that applies to such protocols, such as Lemma~\ref{lem:comm-lb}, applies verbatim, proving the claim.
\end{proof}

\begin{proof}[Proof of Theorem~\ref{thm:onlinecot-streaming}]
\noindent\textbf{(A) SSM $\Rightarrow$ streaming.}\quad
Fix a deterministic $L$-layer SSM with online CoT.
Because all hidden-state coordinates are $p$-bit scalars, each layer state $h_{\ell,t}\in\mathbb{R}^d$
can be stored in $O(dp)$ bits, hence the entire stack $(h_{1,t},\dots,h_{L,t})$ in $O(dpL)$ bits.

A streaming simulator proceeds token-by-token over the exogenous stream $(x_i)_{i=1}^n$.
Upon reading $x_i$, it:
\begin{enumerate}
\item computes $y_{0,t}=\mathrm{emb}(x_i,t)$ and updates the layer states sequentially using \eqref{eq:ssm} to obtain the new outputs and states at that time step;
\item computes the thought policy's next thought token(s) (if any) as deterministic functions of the
current finite-precision configuration;
\item feeds each thought token back through $\mathrm{emb}$ and applies the same state update, repeating
until the thought policy halts for this $i$ (which is guaranteed to happen after finitely many steps
by Definition~\ref{def:onlinecot}).
\end{enumerate}
The simulator carries forward only the current finite-precision stack state (and a counter tracking
how many internal steps have occurred, which is at most the total processed length $T_n$ and hence
requires $O(\log T_n)$ bits), which is $O(dpL+\log T_n)$ bits in total. This exactly reproduces the
SSM's final output, hence simulates it.

\paragraph{(B) Streaming $\Rightarrow$ SSM.}
Let $\mathcal{A}$ be a deterministic streaming algorithm with memory set $\mathcal{M}$,
$|\mathcal{M}|\le 2^S$, transition function $F$, and output $G$.
Assume $dp\ge S$.
Choose an injective encoding $\mathrm{Enc}:\mathcal{M}\to\{0,\dots,2^p-1\}^d\subset\mathbb{R}^d$.
(For example, fix a bijection between $\mathcal{M}$ and a subset of $\{0,1\}^{dp}$ and group the
$dp$ bits into $d$ blocks of $p$ bits.) Let $\mathrm{Dec}$ denote its inverse on $\mathrm{Enc}(\mathcal{M})$.

We construct a \emph{single-layer} SSM (so we drop the layer index) that processes an augmented
stream of length $2n$ consisting of exogenous tokens at odd times and one thought token at each even
time. We choose the token dimension
$
m := m_x + d + 1,
$
where $m_x$ is the (finite-precision) dimension used to represent exogenous inputs.
We assume the embedding appends a bias coordinate:
$
\mathrm{emb}(x,t):=(x,0^d,1)\in\mathbb{R}^{m}.
$

\smallskip\noindent
\emph{Odd times ($t=2i-1$): expose $(x_i,M_{i-1})$ as a thought token without changing state.}
Set
\[
A_{2i-1}=I_d,\qquad B_{2i-1}=0.
\]
Thus $h_{2i-1}=h_{2i-2}$.
Define the readout map at odd times by
\[
\mathrm{out}_{2i-1}(h,y):=\bigl(x,\,h,\,1\bigr)\in\mathbb{R}^{m},
\]
where $x$ denotes the first $m_x$ coordinates of $y=\mathrm{emb}(x_i,2i-1)$.
Hence the layer output at time $2i-1$ is
\[
y_{2i-1}=(x_i,h_{2i-2},1).
\]
We interpret $y_{2i-1}$ as the (single) thought token inserted before $x_{i+1}$.

\smallskip\noindent
\emph{Even times ($t=2i$): update the hidden state to the next streaming memory.}
At time $2i$, the current token is the thought token $y_{0,2i}=y_{2i-1}$.
Set
\[
A_{2i}=0,
\]
and define a \emph{matrix-valued} function $B_{2i}(\cdot)$ on the set of representable inputs
$y\in\mathbb{R}^{m}$ by
\[
B_{2i}(y):=u(y)\,e_m^\top,
\qquad
u(y):=\mathrm{Enc}\Bigl(F\bigl(\mathrm{Dec}(h),x\bigr)\Bigr)\in\mathbb{R}^d,
\]
where $y=(x,h,1)$ is parsed into its $m_x$-coordinate input part $x$, its $d$-coordinate state part
$h$, and its bias coordinate $1$, and $e_m$ is the $m$th standard basis vector.
Because the last coordinate of $y$ equals $1$, we get
\[
h_{2i} \;=\; A_{2i}h_{2i-1}+B_{2i}(y)\,y \;=\; u(y)\,(e_m^\top y) \;=\; u(y).
\]
Therefore
$
h_{2i}=\mathrm{Enc}\Bigl(F\bigl(\mathrm{Dec}(h_{2i-2}),x_i\bigr)\Bigr).
$
Initialize the SSM with $h_0:=\mathrm{Enc}(M_0)$.
An induction on $i$ shows $h_{2i}=\mathrm{Enc}(M_i)$ for all $i\in\{0,1,\dots,n\}$.

Finally, define the readout at the last time $2n$ to emit the streaming output:
\[
\mathrm{out}_{2n}(h,y) := \mathrm{Enc}_\mathcal{Y}\bigl(G(\mathrm{Dec}(h))\bigr),
\]
where $\mathrm{Enc}_\mathcal{Y}$ is any fixed embedding of the output alphabet $\mathcal{Y}$
into $\mathbb{R}^m$ (e.g.\ store it in the first coordinate and zero elsewhere).
Then the SSM output at time $2n$ equals $\mathcal{A}$'s output on the exogenous stream.

All quantities involved are finite-precision: the state $h_t$ always lies in the representable set
$\mathrm{Enc}(\mathcal{M})\subset\{0,\dots,2^p-1\}^d$, and $B_{2i}(y)$ has entries in $\{0,\dots,2^p-1\}$
because $u(y)$ does. The construction inserts exactly one thought token after each exogenous token
(including $x_n$), so the total processed length is exactly $2n$. This completes the construction.
\end{proof}

\section{Deferred Proofs for the Width--Precision Results}
\label{app:width-precision-proofs}

\begin{proof}[Proof of Theorem~\ref{thm:no-collapse}]
Let
$
\mathcal{A}_{w,p} := \Aff(R_p^w)=\{h\mapsto Ah+b : A\in R_p^{w\times w},\ b\in R_p^w\}
$
be the full set of affine self-maps of $R_p^w$. Consider the following width-$w$, precision-$p$ one-layer machine $U_{w,p}$.

\smallskip
\noindent
\emph{Input alphabet.} The first token is an element $x\in R_p^w$. The second token is an affine map $T=(A,b)\in \mathcal{A}_{w,p}$. The third token is a fixed symbol $\mathtt{read}$.

\smallskip
\noindent
\emph{Dynamics.}
\begin{itemize}[leftmargin=2em]
\item At time $t=1$, the machine loads the hidden state with the input vector:
$
h_1=x.
$
This is realized by choosing $A_{1,x}=0$ and $b_{1,x}=x$.
\item At time $t=2$, upon reading $T=(A,b)$, the machine applies
$
h_2 = Ah_1+b,
$
and produces no relevant output.
\item At time $t=3$, upon reading the common token $\mathtt{read}$, the machine leaves the state unchanged and outputs the current hidden state.
\end{itemize}
Thus, on input $(x,T,\mathtt{read})$, the output at time $3$ is exactly $T(x)$.

Assume, for contradiction, that there exists a width-$1$, precision-$pw$ one-layer affine-state machine $V$ that exactly step-preservingly simulates $U_{w,p}$.

Let $S:=R_{pw}$ be the scalar state space of $V$. For each $x\in R_p^w$, let $s_x\in S$ denote the state of $V$ after reading the first token $x$.

We first claim that the map
\[
E:R_p^w\to S,\qquad E(x):=s_x,
\]
is injective. Indeed, if $E(x)=E(x')$ for some $x\neq x'$, then on the common suffix $(\mathrm{id},\mathtt{read})$ the simulator would be in the same scalar state at times $2$ and $3$ for both inputs $(x,\mathrm{id},\mathtt{read})$ and $(x',\mathrm{id},\mathtt{read})$, hence would produce the same output at time $3$. But $U_{w,p}$ outputs $x$ on the first input and $x'$ on the second, contradiction. Since
$
|R_p^w| = 2^{pw}=|R_{pw}|,
$
this injective map is in fact bijective.

For each affine map $T=(A,b)\in \mathcal{A}_{w,p}$, let
\[
F_T(s)=\alpha_T s+\beta_T\qquad (\alpha_T,\beta_T\in R_{pw})
\]
be the scalar affine map implemented by $V$ at time $2$ on input token $T$.

We next claim that the assignment $T\mapsto F_T$ is injective. Suppose $F_T=F_{T'}$ for two affine maps $T\neq T'$. Since $T$ and $T'$ are distinct functions on $R_p^w$, there exists $x\in R_p^w$ with $T(x)\neq T'(x)$. The simulator, started from the first token $x$, reaches the same scalar state after time $2$ on the input $(x,T,\mathtt{read})$ as on $(x,T',\mathtt{read})$, because the scalar transition map at time $2$ is the same. At time $3$ the current input token is the common symbol $\mathtt{read}$, and the simulator's state is the same in both executions, so the produced output at time $3$ must also be the same. This contradicts exact simulation, because $U_{w,p}$ outputs $T(x)$ in the first execution and $T'(x)$ in the second.

Hence the number of affine self-maps of $R_p^w$ cannot exceed the number of scalar affine self-maps of $R_{pw}$. But
$
|\Aff(R_p^w)| = |R_p|^{w^2+w}=2^{p(w^2+w)},
$
whereas
$
|\Aff(R_{pw})| = |R_{pw}|^2 = 2^{2pw}.
$
For every $w\ge 2$ we have $w^2+w>2w$, so
$
2^{p(w^2+w)} > 2^{2pw},
$
a contradiction. Therefore no such simulator $V$ exists.
\end{proof}

\begin{proof}[Proof of Theorem~\ref{thm:no-reverse-collapse}]
Consider the following width-$1$, precision-$3$ machine $C$ over the input alphabet
$
X := \Z/8\Z\ \sqcup\ \{\mathtt{inc}\}.
$
On the first input token $s\in \Z/8\Z$, the machine loads the hidden state with $s$. On every subsequent token $\mathtt{inc}$, it updates
\[
h\longmapsto h+1 \pmod 8
\]
and outputs the new state.

Assume, for contradiction, that there exists a width-$3$, precision-$1$ one-layer affine-state simulator $D$ over the state space
$
V:=\F_2^3.
$
Let $e(s)\in V$ be the simulator state after reading the first token $s$. Since exact simulation must hold for all future suffixes of $\mathtt{inc}$ tokens, the map $s\mapsto e(s)$ is injective, because both sets have size $8$, it is bijective.

Let
\[
f(x)=Ax+b\qquad (A\in \operatorname{GL}(3,2),\ b\in \F_2^3)
\]
be the affine self-map of $V$ used by the simulator on the token $\mathtt{inc}$. Fix any $s\in \Z/8\Z$. Exact simulation implies that after $k$ successive $\mathtt{inc}$ tokens, the output must be $s+k\pmod 8$. In particular, for $k=1,2,\dots,8$ these outputs are pairwise distinct. Since the current input token is the same in each of these steps, equal simulator states would force equal outputs. Therefore the states
$
f(e(s)),\ f^2(e(s)),\ \dots,\ f^8(e(s))
$
are pairwise distinct. As $V$ has exactly $8$ elements, $f$ must be a permutation of $V$ with a single orbit of length $8$, and equivalently, $f$ must have order $8$.

We now show that no affine permutation of $\F_2^3$ has order $8$, which yields the contradiction.

\begin{lemma}\label{lem:no-order-8}
If $f(x)=Ax+b$ is an affine permutation of $\F_2^3$, then the order of $f$ is not equal to $8$.
\end{lemma}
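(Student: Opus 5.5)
The plan is to analyze an affine permutation $f(x)=Ax+b$ of $\F_2^3$ through its linear part $A\in\operatorname{GL}(3,2)$. The first step is to record the general relation between $\operatorname{ord}(f)$ and $k:=\operatorname{ord}(A)$. The iterates satisfy $f^j(x)=A^jx+(I+A+\cdots+A^{j-1})b$. Since the linear part of $f^j$ is $A^j$, the order of $f$ is a multiple of $k$. At $j=k$ the map $f^k$ is a pure translation $x\mapsto x+c$, and over characteristic $2$ such a translation squares to the identity. Hence $\operatorname{ord}(f)\in\{k,2k\}$.

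Next I will list the possible element orders in $\operatorname{GL}(3,2)$, namely $1,2,3,4,7$. A short justification: by Cayley--Hamilton the minimal polynomial of $A$ divides a degree-$3$ polynomial over $\F_2$ with nonzero constant term. Its factors are $x+1$, $x^2+x+1$ (order $3$), and the irreducible cubics (order $7$). Unipotent parts $(x+1)^e$ with $e\le 3$ contribute orders $1,2,4$, and the mixed case $(x+1)(x^2+x+1)$ gives order $6$. The order-$6$ case needs no separate treatment, since it would only give $\operatorname{ord}(f)\in\{6,12\}$, which never equals $8$. By the first step, $\operatorname{ord}(f)=8$ forces $k\in\{4,8\}$, so $k=4$. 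In fact it suffices to know that $\operatorname{ord}(A)$ is never $8$, so only $k=4$ needs handling.

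The key step is the case $k=4$. Here $A=I+N$ is unipotent with $N^2\neq 0$ and $N^3=0$. This holds because $A^4=(I+N)^4=I+N^4$ in characteristic $2$, so $A$ is unipotent; and $A^2=I+N^2\ne I$. Then
\[
f^4(x)=A^4x+(I+A+A^2+A^3)b = x+(I+A)^3 b = x+N^3b = x,
\]
using the characteristic-$2$ identity $1+a+a^2+a^3=(1+a)^3$. So $f^4=\mathrm{id}$, and therefore $\operatorname{ord}(f)\le 4$, which contradicts order $8$.

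I expect the main obstacle to be justifying the classification of orders in $\operatorname{GL}(3,2)$ cleanly, especially excluding $\operatorname{ord}(A)=8$. The minimal-polynomial argument handles this: a unipotent $3\times 3$ matrix over $\F_2$ satisfies $N^3=0$, so $A^4=I$. An alternative route is to embed $f$ as the matrix $\begin{pmatrix}A&b\\0&1\end{pmatrix}\in\operatorname{GL}(4,2)$ and observe that this embedded matrix is unipotent exactly when $A$ is. In that case it equals $I+N'$ with $N'^4=0$, so its $4$th power is the identity; this yields the same contradiction directly.
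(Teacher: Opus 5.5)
Your proposal is correct and follows essentially the paper's route. You relate $\operatorname{ord}(f)$ to $r=\operatorname{ord}(A)$ via the characteristic-$2$ translation $f^r$, use the element orders of $\operatorname{GL}(3,2)$ to reduce to the unipotent case $r=4$, and there show $I+A+A^2+A^3=(I+A)^3=N^3=0$, so $f^4=\mathrm{id}$; the paper expands $A^2,A^3$ explicitly where you use $(1+a)^3=1+a+a^2+a^3$. One harmless slip: a matrix with characteristic polynomial $(x+1)(x^2+x+1)$ is conjugate to the block matrix $\operatorname{diag}(1,C)$ with $C$ of order $3$, so it has order $3$, not $6$ (there are no elements of order $6$ in $\operatorname{GL}(3,2)$, matching your own list $\{1,2,3,4,7\}$), and your argument covers that case anyway.
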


\begin{proof}
Write $r:=\operatorname{ord}(A)$. Since
\[
f^m(x)=A^m x + \sum_{i=0}^{m-1}A^i b,
\]
we obtain
\[
f^r(x)=x+c,\qquad c:=\sum_{i=0}^{r-1}A^i b.
\]
Because $V=\F_2^3$ has characteristic $2$, every translation $x\mapsto x+c$ has order $1$ or $2$. Hence the order of $f$ divides $2r$.

It remains to understand the possible orders of $A\in\operatorname{GL}(3,2)$. The characteristic polynomial of such an $A$ has degree $3$ and nonzero constant term, so its irreducible factors over $\F_2$ are among
\[
x+1,\qquad x^2+x+1,\qquad x^3+x+1,\qquad x^3+x^2+1.
\]
The latter two cubic polynomials are primitive, hence contribute order $7$, the quadratic polynomial contributes order $3$, and the unipotent factor $(x+1)^m$ contributes $2$-power order at most $4$ in dimension $3$. Therefore, $
r\in\{1,2,3,4,7\}.
$
If $r\in\{1,2,3,7\}$, then every divisor of $2r$ belongs to
$
\{1,2,3,4,6,7,14\},
$
so in particular is not $8$.

The only remaining possibility is $r=4$. In that case $A$ is unipotent of index $3$, so we may write $A=I+N$ with $N^3=0$. Then in characteristic $2$,
\[
A^2=(I+N)^2=I+N^2,
\qquad
A^3=(I+N)^3=I+N+N^2,
\]
and therefore
\[
I+A+A^2+A^3 = I+(I+N)+(I+N^2)+(I+N+N^2)=0.
\]
Consequently,
\[
c=(I+A+A^2+A^3)b=0,
\]
so $f^4=\mathrm{id}$. Thus in the case $r=4$ the order of $f$ divides $4$, again not $8$.

Therefore no affine permutation of $\F_2^3$ has order $8$.
\end{proof}

By Lemma~\ref{lem:no-order-8}, the simulator transition on $\mathtt{inc}$ cannot have order $8$, contradicting the previous paragraph. Hence no such simulator exists.
\end{proof}

\begin{proof}[Proof of Proposition~\ref{prop:cot-memory}]
For part (a), apply Theorem~\ref{thm:onlinecot-streaming}(A) with state dimension \(d=w\). The given \(L\)-layer online-CoT SSM is simulated by a deterministic one-pass streaming algorithm using
$
O(Lwp+\log T_n)
$
bits of persistent memory. Now apply Theorem~\ref{thm:onlinecot-streaming}(B) to this streaming algorithm with target state dimension \(d=1\) and precision
$
p' = O(Lwp+\log T_n).
$
This yields a deterministic single-layer generalized SSM with online CoT, width \(1\), and precision \(p'\) that simulates the original machine. By the construction in Theorem~\ref{thm:onlinecot-streaming}(B), the simulation uses one additional internal step per exogenous token.

For part~(b), apply Theorem~\ref{thm:onlinecot-streaming}(A) to the given single-layer online-CoT SSM of width \(1\) and precision \(q\). This produces a deterministic one-pass streaming algorithm using
\[
O(q+\log T_n)
\]
bits of persistent memory. Choose \(C>0\) sufficiently large to dominate the hidden constant in this bound. If
$
wp \ge C(q+\log T_n),
$
then Theorem~\ref{thm:onlinecot-streaming}(B), applied with target state dimension \(d=w\) and precision \(p\), yields a deterministic single-layer generalized SSM with online CoT, width \(w\), and precision \(p\) that simulates the original machine. The final sentence follows by combining parts (a) and (b).
\end{proof}

\bibliography{main}

@inproceedings{gu2022s4,
  title={Efficiently Modeling Long Sequences with Structured State Spaces},
  author={Gu, Albert and Goel, Karan and Re, Christopher},
  booktitle={International Conference on Learning Representations},
  year={2022}
}

@inproceedings{
gu2023mamba,
title={Mamba: Linear-Time Sequence Modeling with Selective State Spaces},
author={Albert Gu and Tri Dao},
booktitle={First Conference on Language Modeling},
year={2024},
}

@inproceedings{dao2024mamba2,
  title={Transformers are SSMs: generalized models and efficient algorithms through structured state space duality},
  author={Dao, Tri and Gu, Albert},
  booktitle={Proceedings of the 41st International Conference on Machine Learning},
  pages={10041--10071},
  year={2024}
}

@inproceedings{merrill2024illusion,
  title={The Illusion of State in State-Space Models},
  author={Merrill, William and Petty, Jackson and Sabharwal, Ashish},
  booktitle={International Conference on Machine Learning},
  pages={35492--35506},
  year={2024},
  organization={PMLR}
}

@article{sarrof2024expressive,
  title={The expressive capacity of state space models: A formal language perspective},
  author={Sarrof, Yash and Veitsman, Yana and Hahn, Michael},
  journal={Advances in Neural Information Processing Systems},
  volume={37},
  pages={41202--41241},
  year={2024}
}

@article{cirone2024theoretical,
  title={Theoretical foundations of deep selective state-space models},
  author={Muca Cirone, Nicola and Orvieto, Antonio and Walker, Benjamin and Salvi, Cristopher and Lyons, Terry},
  journal={Advances in Neural Information Processing Systems},
  volume={37},
  pages={127226--127272},
  year={2024}
}

@inproceedings{merrill2024cot,
  title={The Expressive Power of Transformers with Chain of Thought},
  author={Merrill, William and Sabharwal, Ashish},
  booktitle={The Twelfth International Conference on Learning Representations},
  year      = {2024},
}

@inproceedings{li2024cot,
  title={Chain of thought empowers transformers to solve inherently serial problems},
  author={Li, Zhiyuan and Liu, Hong and Zhou, Denny and Ma, Tengyu},
  booktitle={The Twelfth International Conference on Learning Representations},
  year      = {2024},
}

@article{nisan1993rounds,
  title={Rounds in communication complexity revisited},
  author={Nisan, Noam and Wigderson, Avi},
  journal={SIAM Journal on Computing},
  volume={22},
  number={1},
  pages={211--219},
  year={1993},
}

@inproceedings{mao2025gadgetless,
  title={Gadgetless Lifting Beats Round Elimination: Improved Lower Bounds for Pointer Chasing},
  author={Mao, Xinyu and Yang, Guangxu and Zhang, Jiapeng},
  booktitle={16th Innovations in Theoretical Computer Science Conference (ITCS 2025)},
  year={2025},
}

@article{yehudayoff2020pointer,
  title={Pointer chasing via triangular discrimination},
  author={Yehudayoff, Amir},
  journal={Combinatorics, Probability and Computing},
  volume={29},
  number={4},
  pages={485--494},
  year={2020},
}

@inproceedings{ponzio1999pointer,
  title={The communication complexity of pointer chasing: Applications of entropy and sampling},
  author={Ponzio, Stephen J and Radhakrishnan, Jaikumar},
  booktitle={Proceedings of the Thirty-First Annual ACM Symposium on Theory of Computing},
  pages={602--611},
  year={1999}
}

@article{wei2022chain,
  title={Chain-of-thought prompting elicits reasoning in large language models},
  author={Wei, Jason and Wang, Xuezhi and Schuurmans, Dale and Bosma, Maarten and Xia, Fei and Chi, Ed and Le, Quoc V and Zhou, Denny and others},
  journal={Advances in neural information processing systems},
  volume={35},
  pages={24824--24837},
  year={2022}
}

@article{feng2023towards,
  title={Towards revealing the mystery behind chain of thought: a theoretical perspective},
  author={Feng, Guhao and Zhang, Bohang and Gu, Yuntian and Ye, Haotian and He, Di and Wang, Liwei},
  journal={Advances in Neural Information Processing Systems},
  volume={36},
  pages={70757--70798},
  year={2023}
}

@inproceedings{nye2021show,
  title={Show Your Work: Scratchpads for Intermediate Computation with Language Models},
  author={Nye, Maxwell and Andreassen, Anders Johan and Gur-Ari, Guy and Michalewski, Henryk and Austin, Jacob and Bieber, David and Dohan, David and Lewkowycz, Aitor and Bosma, Maarten and Luan, David and others},
  booktitle={Deep Learning for Code Workshop},
  year      = {2021},
}

@inproceedings{smith2023s5,
  title={Simplified State Space Layers for Sequence Modeling},
  author={Smith, Jimmy TH and Warrington, Andrew and Linderman, Scott},
  booktitle={The Eleventh International Conference on Learning Representations},
  year      = {2023},
}

@inproceedings{orvieto2023resurrecting,
  title={Resurrecting recurrent neural networks for long sequences},
  author={Orvieto, Antonio and Smith, Samuel L and Gu, Albert and Fernando, Anushan and Gulcehre, Caglar and Pascanu, Razvan and De, Soham},
  booktitle={International conference on machine learning},
  pages={26670--26698},
  year={2023},
}

@article{de2024griffin,
  title={Griffin: Mixing Gated Linear Recurrences with Local Attention for Efficient Language Models},
  author={Soham De and Samuel L. Smith and Anushan Fernando and Aleksandar Botev and George Cristian-Muraru and Albert Gu and Ruba Haroun and Leonard Berrada and Yutian Chen and Srivatsan Srinivasan and Guillaume Desjardins and Arnaud Doucet and David Budden and Yee Whye Teh and Razvan Pascanu and Nando de Freitas and Caglar Gulcehre},
  journal={ArXiv},
  year={2024},
  volume={abs/2402.19427},
}

@inproceedings{peng2023rwkv,
  title={Rwkv: Reinventing rnns for the transformer era},
  author={Peng, Bo and Alcaide, Eric and Anthony, Quentin and Albalak, Alon and Arcadinho, Samuel and Biderman, Stella and Cao, Huanqi and Cheng, Xin and Chung, Michael and Derczynski, Leon and others},
  booktitle={Findings of the association for computational linguistics: EMNLP 2023},
  pages={14048--14077},
  year={2023}
}

@inproceedings{huang2025cot,
  title={Transformers Provably Learn Chain-of-Thought Reasoning with Length Generalization},
  author={Huang, Yu and Wen, Zixin and Singh, Aarti and Chi, Yuejie and Chen, Yuxin},
  booktitle={The Thirty-ninth Annual Conference on Neural Information Processing Systems},
  year      = {2025},
}

@article{siegelmann1995computational,
  title={On the Computational Power of Neural Nets},
  author={Siegelmann, HT and Sontag, ED},
  journal={Journal of Computer and System Sciences},
  volume={50},
  number={1},
  pages={132--150},
  year={1995},
}

@inproceedings{weiss2018practical,
  title={On the practical computational power of finite precision RNNs for language recognition},
  author={Weiss, Gail and Goldberg, Yoav and Yahav, Eran},
  booktitle={Proceedings of the 56th Annual Meeting of the Association for Computational Linguistics (Volume 2: Short Papers)},
  pages={740--745},
  year={2018}
}

@inproceedings{merrill2020formal,
  title={A formal hierarchy of RNN architectures},
  author={Merrill, William and Weiss, Gail and Goldberg, Yoav and Schwartz, Roy and Smith, Noah A and Yahav, Eran},
  booktitle={Proceedings of the 58th Annual Meeting of the Association for Computational Linguistics},
  pages={443--459},
  year={2020}
}

@inproceedings{deletang2023neural,
  title={Neural Networks and the Chomsky Hierarchy},
  author={Deletang, Gregoire and Ruoss, Anian and Grau-Moya, Jordi and Genewein, Tim and Wenliang, Li Kevin and Catt, Elliot and Cundy, Chris and Hutter, Marcus and Legg, Shane and Veness, Joel and others},
  booktitle={The Eleventh International Conference on Learning Representations},
  year      = {2023},
}

@article{merrill2022saturated,
  title={Saturated transformers are constant-depth threshold circuits},
  author={Merrill, William and Sabharwal, Ashish and Smith, Noah A},
  journal={Transactions of the Association for Computational Linguistics},
  volume={10},
  pages={843--856},
  year={2022}
}

@article{merrill2023parallelism,
  title={The parallelism tradeoff: Limitations of log-precision transformers},
  author={Merrill, William and Sabharwal, Ashish},
  journal={Transactions of the Association for Computational Linguistics},
  volume={11},
  pages={531--545},
  year={2023}
}

@inproceedings{jelassi2024repeat,
  title={Repeat after me: transformers are better than state space models at copying},
  author={Jelassi, Samy and Brandfonbrener, David and Kakade, Sham M and Malach, Eran},
  booktitle={Proceedings of the 41st International Conference on Machine Learning},
  pages={21502--21521},
  year={2024}
}

@inproceedings{arora2024zoology,
  title={Zoology: Measuring and Improving Recall in Efficient Language Models},
  author={Arora, Simran and Eyuboglu, Sabri and Timalsina, Aman and Johnson, Isys and Poli, Michael and Zou, James and Rudra, Atri and R{\'e}, Christopher},
  booktitle={Proceedings of 12th International Conference on Learning Representations (ICLR)},
  year={2024},
}

@book{kushilevitz1997communication,
  title     = {Communication Complexity},
  author    = {Kushilevitz, Eyal and Nisan, Noam},
  publisher = {Cambridge University Press},
  year      = {1997},
}

@article{alon1999space,
  title={The Space Complexity of Approximating the Frequency Moments},
  author={Alon, Noga and Matias, Yossi and Szegedy, Mario},
  journal={Journal of Computer and System Sciences},
  volume={58},
  number={1},
  pages={137--147},
  year={1999},
}

@inproceedings{telgarsky2016benefits,
  title={Benefits of depth in neural networks},
  author={Telgarsky, Matus},
  booktitle={Conference on learning theory},
  pages={1517--1539},
  year={2016}
}

@inproceedings{wen2024rnns,
  title={RNNs are not Transformers (Yet): The Key Bottleneck on In-Context Retrieval},
  author={Wen, Kaiyue and Dang, Xingyu and Lyu, Kaifeng},
  booktitle={The Thirteenth International Conference on Learning Representations},
  year    = {2024}
}

@article{soydan2024s,
  title={S7: Selective and simplified state space layers for sequence modeling},
  author={Soydan, Taylan and Zubi{\'c}, Nikola and Messikommer, Nico and Mishra, Siddhartha and Scaramuzza, Davide},
  journal={arXiv preprint arXiv:2410.03464},
  year={2024}
}

@inproceedings{Zubic_2025_CVPR,
  title={Gg-ssms: Graph-generating state space models},
  author={Zubic, Nikola and Scaramuzza, Davide},
  booktitle={Proceedings of the Computer Vision and Pattern Recognition Conference},
  pages={28863--28873},
  year={2025}
}

@misc{
zubic2025regularity,
title={Regularity and Stability Properties of Selective {SSM}s with Discontinuous Gating},
author={Nikola Zubic and Davide Scaramuzza},
year={2025},
}

@inproceedings{
zubic2025limits,
title={Limits of Deep Learning: Sequence Modeling through the Lens of Complexity Theory},
author={Nikola Zubic and Federico Sold{\`a} and Aurelio Sulser and Davide Scaramuzza},
booktitle={The Thirteenth International Conference on Learning Representations},
year={2025},
}

@inproceedings{
goyal2024think,
title={Think before you speak: Training Language Models With Pause Tokens},
author={Sachin Goyal and Ziwei Ji and Ankit Singh Rawat and Aditya Krishna Menon and Sanjiv Kumar and Vaishnavh Nagarajan},
booktitle={The Twelfth International Conference on Learning Representations},
year={2024},
}

@article{abbe2023generalization,
  title={Generalization on the Unseen, Logic Reasoning and Degree Curriculum},
  author={Emmanuel Abbe and Samy Bengio and Aryo Lotfi and Kevin Rizk},
  journal={International Conference on Machine Learning (ICML)},
  year={2023},
}

@inproceedings{
barak2022hidden,
title={Hidden Progress in Deep Learning: {SGD} Learns Parities Near the Computational Limit},
author={Boaz Barak and Benjamin L. Edelman and Surbhi Goel and Sham M. Kakade and Eran Malach and Cyril Zhang},
booktitle={Advances in Neural Information Processing Systems},
year={2022},
}

@inproceedings{chen2025theoretical,
  title={Theoretical limitations of multi-layer transformer},
  author={Chen, Lijie and Peng, Binghui and Wu, Hongxun},
  booktitle={2025 IEEE 66th Annual Symposium on Foundations of Computer Science (FOCS)},
  pages={2631--2653},
  year={2025},
  organization={IEEE}
}
\bibliographystyle{tmlr}

\end{document}